\documentclass[letterpaper]{article} 
\usepackage[preprint]{aaai2027}  
\usepackage[hyphens]{url}  
\usepackage{graphicx} 
\usepackage{natbib}  
\usepackage{caption} 
\usepackage{booktabs}

\usepackage[switch]{lineno}

\usepackage[linesnumbered,ruled,vlined]{algorithm2e}
\SetAlgoNlRelativeSize{0}
\usepackage{algpseudocode}
\usepackage{array}
\usepackage{amsmath}
\usepackage{amssymb}
\usepackage{amsthm}
\usepackage{tikz}

\newtheorem{definition}{Definition}
\newtheorem{theorem}{Theorem}
\newtheorem{lemma}{Lemma}

\newcommand{\C}{\mathcal{C}}
\newcommand{\CM}{\mathcal{C}_M}

\title{Key-Interval A*: Accelerating Grid Pathfinding via Structural Abstraction}

\author{
    Taiquan Sui
}

\affiliations{
    Department of Computer Science and Engineering\\
    Chalmers University of Technology\\
    Gothenburg, Sweden\\
    taiquan@chalmers.se
}

\begin{document}

\maketitle

\begin{abstract}

Existing exact methods for 4-connected grid pathfinding reduce online search, but often either retain fine-grained search states or require substantial preprocessing. This paper presents Key-Interval A* (KIA*), an optimal pathfinding algorithm that uses lightweight preprocessing to construct and search over a compact interval-level abstraction of free space. KIA* represents free space using intervals: maximal contiguous runs of traversable cells. It extracts key intervals that capture structural boundary changes and connects them through contiguous non-key regions. KIA* then performs A*-style search on the resulting key-interval graph and constructively reconstructs grid paths from interval chains, without cell-level local search. We prove the completeness and optimality of KIA* on 4-connected grids. Experiments on standard benchmarks show that KIA* preserves exact shortest-path lengths and achieves the fastest runtime on seven of eight benchmark groups, with the largest gains on structured and game maps.

\end{abstract}

\section{Introduction}

Pathfinding on grid maps is a fundamental problem in game AI, robotics, and autonomous navigation. Classical search algorithms such as A*~\cite{DBLP:journals/tssc/HartNR68} guarantee completeness and shortest-path optimality, but their efficiency degrades as map size and query difficulty increase. A key reason is that these algorithms operate at the level of individual grid cells, treating each cell as an independent search state. In large or structured environments, this representation fails to exploit the geometric continuity of free space, leading to extensive exploration of states that are locally distinct but structurally equivalent. As a result, optimal grid search can incur substantial computational overhead even when the underlying free-space structure is simple and highly regular.

Existing techniques accelerate grid-based pathfinding in several ways.
Symmetry-based methods such as Jump Point Search~\cite{DBLP:conf/aaai/HaraborG11}
and rectangular decomposition~\cite{DBLP:conf/aiide/HaraborB10}
prune redundant expansions, but still operate over grid-level states.
Other methods introduce higher-level structures: Subgoal
graphs~\cite{DBLP:conf/aips/UrasKH13} abstract obstacle-corner decision
points, HPA*~\cite{DBLP:journals/jogd/Botea0S04} partitions the environment into fixed-size clusters and relies on the resulting spatial abstraction to guide high-level search, and database-based approaches such as CPD~\cite{DBLP:conf/aiide/Botea11}
and Block A*~\cite{DBLP:conf/aaai/YapBHS11} further trade runtime efficiency for heavy offline computation and memory requirements. Despite these advances, the search representations used by existing methods are typically derived from cell-level pruning, sparse decision points, fixed hierarchical partitions, or precomputed path information. 

In this work, we introduce Key-Interval A* (KIA*), an optimal pathfinding algorithm for 4-connected grids based on a structural interval abstraction of free space, where intervals are maximal contiguous runs of traversable cells along grid rows or columns. The main contributions are: (1) a lightweight preprocessing procedure that extracts key intervals from local boundary changes and connects them through contiguous non-key regions; (2) an A*-style search algorithm over the resulting key-interval graph, with waypoint-update and dominance rules that preserve completeness and shortest-path optimality; and (3) an evaluation on standard grid benchmarks showing that KIA* preserves exact shortest-path lengths and achieves strong online runtime performance, particularly on structured and game maps.

\section{Preliminaries and Notation}

We consider a finite two-dimensional 4-connected grid graph
$G=(V,E)$ of size $M\times N$. Each grid cell $(x,y)$ represents a
vertex, where $x$ increases from left to right and $y$ increases from
top to bottom. Cells outside the grid boundary are treated as non-traversable.
The set $V\subseteq\mathbb{Z}^2$ contains all traversable cells, and
two vertices $(x_1,y_1),(x_2,y_2)\in V$ are connected by an edge if and
only if $|x_1-x_2|+|y_1-y_2|=1$. Every edge has unit cost.

Given a start vertex $s\in V$ and a target vertex $t\in V$, the objective
is to find a path $\pi=\langle v_0,v_1,\ldots,v_k\rangle$ such that
$v_0=s$, $v_k=t$, and every consecutive pair is adjacent. The path
length is the number of edges, namely $|\pi|=k$.

We use dot notation for structured entities; for a vertex $v=(x,y)$,
$v.x$ and $v.y$ denote its coordinates. For vertices $a,b\in V$, let
$\mathcal{C}(a,b)$ denote their shortest-path cost in $G$ if such a path exists. Let
\[
\mathcal{C}_M(a,b)=|a.x-b.x|+|a.y-b.y|
\]
denote the Manhattan distance between $a$ and $b$.

\begin{definition}[Manhattan Feasibility]
Two vertices $a,b\in V$ are said to be \emph{Manhattan-feasible} if there
exists a 4-connected path from $a$ to $b$ whose length equals their
Manhattan distance $\mathcal{C}_M(a,b)$.
Such a path is called a \emph{Manhattan-feasible path}.
Equivalently, $\mathcal{C}(a,b)=\mathcal{C}_M(a,b)$.
\end{definition}

\section{Preprocessing}

To enable efficient pathfinding, we conduct a one-time preprocessing phase on the input grid map. This phase identifies intervals, extracts critical structural change points (termed \emph{key points}), and designates the intervals containing these key points as \emph{Key Intervals} for search. These components form the structural backbone of the Key-Interval A* algorithm.

We first introduce the notion of an \emph{interval}, which serves as the basic unit of
abstraction throughout the algorithm.

\begin{definition}[Interval]
A vertical \emph{interval} is a maximal contiguous sequence of traversable grid cells
along a single column. Formally, a vertical interval is represented as
\[
I = \langle x, y_s, y_e \rangle,
\]
where every cell $(x,y)$ with $y\in[y_s,y_e]$ is traversable, and both
$(x,y_s-1)$ and $(x,y_e+1)$ are non-traversable.
The endpoints $(x,y_s)$ and $(x,y_e)$ are referred to as the start and end vertices of $I$.
A horizontal interval is defined analogously as a maximal contiguous sequence of
traversable cells along a single row and is represented as $\langle y, x_s, x_e \rangle$.
\end{definition}

Figure~\ref{fig:preprocess} illustrates an example. On column $x=0$ there is a single
vertical interval $\langle 0,0,7\rangle$, whereas column $x=3$ contains two vertical
intervals, $\langle 3,0,2\rangle$ and $\langle 3,5,7\rangle$.

\begin{definition}[Direct Neighbor]
For a vertical interval $I=\langle x,y_s,y_e\rangle$, 
a vertical interval $I'=\langle x',y'_s,y'_e\rangle$ is a \emph{direct neighbor} of $I$ if
$|x' - x| = 1$ and 
$[y_s,y_e]\cap[y'_s,y'_e]\neq\emptyset$.
Two intervals are said to be \emph{connected} if they are direct neighbors of each other.
\end{definition}

The set of direct neighbors of $I$ is denoted by $N_d(I)$.
A direct neighbor $I'$ is called a \emph{left} (resp.\ \emph{right}) direct
neighbor if $x'=x-1$ (resp.\ $x'=x+1$), denoted by
$I'\in N_d^l(I)$ (resp.\ $I'\in N_d^r(I)$).

As shown in Figure~\ref{fig:preprocess}, the interval $\langle 2,0,7\rangle$ has a left direct neighbor $\langle 1,1,6\rangle$ and right direct neighbors $\langle 3,0,2\rangle$ and $\langle 3,5,7\rangle$, which together constitute its direct neighbor set.

\begin{definition}[Horizontally Monotone Interval Chain]
Let
\[
\Gamma=(I_0,\ldots,I_m),
\qquad m\ge 1,
\]
be an ordered sequence of vertical intervals, where
$I_i=\langle x^{(i)},y_s^{(i)},y_e^{(i)}\rangle$.
We call $\Gamma$ a \emph{horizontally monotone interval chain} if
\[
I_{i+1}\in N_d(I_i)
\qquad
\text{for all }0\le i<m,
\]
and there exists $d\in\{-1,+1\}$ such that
\[
x^{(i+1)}=x^{(i)}+d
\qquad
\text{for all }0\le i<m.
\]

For such a chain, we define
\[
\operatorname{exit}(\Gamma)=I_1,
\qquad
\operatorname{entry}(\Gamma)=I_{m-1}.
\]
Here, $\operatorname{exit}(\Gamma)$ is the direct neighbor
immediately after leaving $I_0$, whereas
$\operatorname{entry}(\Gamma)$ is the direct neighbor from which
$I_m$ is entered. When $m=1$, these definitions give
$\operatorname{exit}(\Gamma)=I_m$ and
$\operatorname{entry}(\Gamma)=I_0$.
\end{definition}

\subsection{Key Point Extraction}

We identify key points as vertices that capture local structural changes
in the shape of free space. Intuitively, as we scan the grid column by
column, the boundaries of traversable regions may expand, shrink, or
remain aligned. Key points correspond to locations where these boundary
trends change in a way that disrupts direct Manhattan-feasible
connectivity.

\begin{figure}
    \centering
    \begin{tikzpicture}[scale=0.45]

    \fill[gray!70] (1, 0) rectangle (2, 1);
    \fill[gray!70] (1, 7) rectangle (2, 8);

    \fill[gray!70] (3, 3) rectangle (5, 5);

    \fill[gray!70] (7, 2) rectangle (8, 8);

    \fill[gray!70] (9, 0) rectangle (10, 5);
    \fill[gray!70] (10, 4) rectangle (13, 5);
    \fill[gray!70] (11, 6) rectangle (14, 7);
    \fill[gray!70] (11, 7) rectangle (12, 8);
    \fill[gray!70] (13, 1) rectangle (14, 5);
    \fill[gray!70] (12, 1) rectangle (13, 2);
    \fill[gray!70] (11, 1) rectangle (12, 3);

    \draw[thick] (0, 0) grid (15, 8);

    \foreach \x in {0,...,14} {
        \node[above] at (\x+0.5,8) {\x};
    }

    \foreach \y [count=\yy from 0] in {7,...,0} {
        \node[left] at (0,\yy+0.5) {\y};
    }

    \draw[->] (1.5,1.3) -- (1.5,1.7);
    \draw[->] (1.5,6.7) -- (1.5,6.3);

    \draw[->] (2.7,3.5) -- (2.3,3.5);

    \draw[->] (4.5,2.7) -- (4.5,2.3);
    \draw[->] (4.5,5.3) -- (4.5,5.7);

    \draw[->] (5.3,3.5) -- (5.7,3.5);

    \draw[->] (7.5,1.7) -- (7.5,1.3);

    \draw[->] (10.7,1.5) -- (10.3,1.5);

    \draw[->] (11.5,3.3) -- (11.5,3.7);

    \draw[<->] (13.5,5.2) -- (13.5,5.8);
    \draw[->] (13.5,0.7) -- (13.5,0.3);
    
    \draw[->] (14.3,6.5) -- (14.7,6.5);
    \draw[->] (14.3,1.5) -- (14.7,1.5);

    \end{tikzpicture}
    \caption{Example of preprocessing. Vertices marked with arrows indicate key points along with their associated directions}
    \label{fig:preprocess}
\end{figure}

\paragraph{Trend Computation}
We perform a vertical scan from left to right. For each vertical interval
$I=\langle x,y_s,y_e\rangle$, we compare its endpoints with those of its
direct neighbors in the previous column.

If $N^l_d(I)=\emptyset$, then $I$ has no left direct neighbor and no
boundary comparison is available. In this case, both boundary trends are
initialized as
\[
T_s(I)=T_e(I)=\textsc{LEVEL}.
\]

Otherwise, let
\[
I^l_{\min}(I) := \arg\min_{I'\in N^l_d(I)} I'.y_s,
\]
\[
I^l_{\max}(I) := \arg\max_{I'\in N^l_d(I)} I'.y_e,
\]
denote the left neighbor with the smallest start coordinate
and the left neighbor with the largest end coordinate. The right-side quantities $I^r_{\min}(I)$ and $I^r_{\max}(I)$ are defined
analogously over $N_d^r(I)$.

To capture boundary evolution, we define two difference values:
\[
\Delta_s(I) := I.y_s - I^l_{\min}(I).y_s,
\quad
\Delta_e(I) := I.y_e - I^l_{\max}(I).y_e.
\]

Based on these differences, the start trend $T_s(I)$ and end trend
$T_e(I)$ are defined as
\[
T_s(I)=
\begin{cases}
\textsc{DOWN}  & \text{if } \Delta_s(I) > 0,\\
\textsc{UP}    & \text{if } \Delta_s(I) < 0,\\
T_s(I^l_{\min}(I)) & \text{if } \Delta_s(I)=0,
\end{cases}
\]
and
\[
T_e(I)=
\begin{cases}
\textsc{DOWN}  & \text{if } \Delta_e(I) > 0,\\
\textsc{UP}    & \text{if } \Delta_e(I) < 0,\\
T_e(I^l_{\max}(I)) & \text{if } \Delta_e(I)=0.
\end{cases}
\]

These trends record the direction of the boundary shift, inheriting the previous trend if the boundary remains aligned.

For example, consider the interval $\langle 3, 0, 2 \rangle$ in Figure~\ref{fig:preprocess}. Its $I^l_{\max}(I)$ is $\langle 2, 0, 7 \rangle$. Here, $\Delta_e = 2 - 7 = -5 < 0$, strictly shifting the end boundary upward, yielding $T_e = \textsc{UP}$.

\paragraph{Key Point Identification}

Based on this trend information, we extract \emph{key points}—vertices that represent local structural inflections where free space contracts and subsequently expands. 
For a vertical interval $I=\langle x,y_s,y_e\rangle$, if $N_d^r(I)\neq\emptyset$, its endpoints are labeled according to the
following rules:
\begin{itemize}
\item The start vertex $(x,y_s)$ is labeled $\downarrow$ if
\[
T_s(I)=\textsc{DOWN}
\quad\text{and}\quad
I^r_{\min}(I).y_s < I.y_s.
\]
\item The end vertex $(x,y_e)$ is labeled $\uparrow$ if
\[
T_e(I)=\textsc{UP}
\quad\text{and}\quad
I^r_{\max}(I).y_e > I.y_e.
\]
\end{itemize}
If $N_d^r(I)=\emptyset$, neither endpoint is labeled.

The horizontal scan is defined symmetrically by exchanging the roles of $x$ and $y$. In this case, the trends \textsc{UP} and \textsc{DOWN} correspond to \textsc{LEFT} and \textsc{RIGHT}, respectively, yielding $\leftarrow$ and $\rightarrow$ key points. An endpoint is defined as a \emph{key point} if and only if it is assigned
one of the above directional labels. 
Figure~\ref{fig:preprocess} illustrates the detected key points.

Vertical intervals serve as the abstraction units used by KIA*, while
horizontal intervals are used only in the symmetric key-point extraction
pass. KIA* can also be formulated with horizontal intervals as the
abstraction units by symmetrically exchanging the roles of the two
orientations. This paper considers only the
vertical-interval formulation.

\subsection{Key Interval}

To construct a compact graph representation, we select vertical intervals that are anchored by structural features, termed \emph{key intervals}.
Formally, any vertical interval $I$ containing at least one key point is designated as a \emph{key interval}, denoted by $K$. Let $\mathcal{K}$ denote the set of all key intervals.
Since a vertical interval has a unique start vertex and end vertex, $K$ contains at most one $\uparrow$ key point and at most one $\downarrow$ key point, denoted by $K.\text{up}$ and $K.\text{down}$, respectively. 

\begin{theorem}
\label{thm:key-point}
Let $I$ be a vertical interval.
If $|N^l_d(I)| > 1$ or $|N^r_d(I)| > 1$,
then $I$ contains at least one key point and is therefore a key interval.
\end{theorem}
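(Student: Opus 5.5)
The plan is to use the \emph{horizontal} key-point scan rather than the vertical one. Two distinct direct neighbors of $I$ on the same side force an obstacle gap in the adjacent column alongside $I$. The horizontal scan then has to place a $\leftarrow$ or $\rightarrow$ label on a cell of $I$ at the bottom of that gap. I treat the case $|N^r_d(I)|>1$ in detail; the case $|N^l_d(I)|>1$ follows by the left–right mirror argument, with start trends and $\rightarrow$ labels in place of end trends and $\leftarrow$ labels.

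\textbf{Step 1 (locating the gap).} Let $I=\langle x,y_s,y_e\rangle$, and let $J_1,J_2\in N^r_d(I)$ be two distinct right neighbors, chosen consecutive in column $x+1$ with $J_1$ above $J_2$. Set $a=J_1.y_e+1$ and $b=J_2.y_s-1$. By maximality, every cell $(x+1,y)$ with $a\le y\le b$ is non-traversable, and $a\le b$. Both $J_1$ and $J_2$ overlap $[y_s,y_e]$, so $J_1.y_e\ge y_s$ and $J_2.y_s\le y_e$. Hence $[a,b]\subseteq[y_s+1,y_e-1]$. In particular, the cells $(x,y)$ for $a-1\le y\le b+1$ all lie in $I$, and the cells $(x+1,a-1)$ and $(x+1,b+1)$ are free.

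\textbf{Step 2 (trend propagation in the horizontal scan).} Rows are scanned top to bottom. For $a\le y\le b$, let $H_y$ be the horizontal interval of row $y$ containing $(x,y)$; it ends exactly at $x$. At row $a$, the upper-row interval containing $(x,a-1)$ extends to at least $x+1$, so the upper neighbor with maximal end satisfies $\Delta_e(H_a)<0$. Thus $T_e(H_a)=\textsc{LEFT}$.

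For $a<y\le b$, the upper neighbor containing $(x,y-1)$ is $H_{y-1}$, which ends at $x$. Any other upper neighbor of $H_y$ is a disjoint interval lying to its left and so ends before $x$. Hence $H_{y-1}$ is the maximizer, $\Delta_e(H_y)=0$, and $H_y$ inherits $T_e(H_{y-1})$. By induction, $T_e(H_b)=\textsc{LEFT}$.

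\textbf{Step 3 (labeling).} The row below $b$ contains the interval through $(x,b+1)$ and $(x+1,b+1)$. This interval is a lower neighbor of $H_b$ and ends beyond $x$, so the maximal-end lower neighbor has end $>x=H_b.x_e$. The symmetric labeling rule therefore marks the end vertex $(x,b)$ with $\leftarrow$. Since $(x,b)\in I$, the interval $I$ contains a key point and is a key interval.

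The main obstacle I expect is making the ``symmetric'' horizontal rules precise. I must check that the maximal-end neighbor in the inheritance step is really $H_{y-1}$ and not some other interval; this is the disjointness argument in Step 2. I also must check that ties or a degenerate gap with $a=b$ cause no problem: in that case Steps 2 and 3 occur at the same row, which is fine.
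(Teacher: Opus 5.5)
Your proof is correct and follows the paper's argument: use the horizontal scan across the blocked gap between two consecutive same-side neighbors, set the trend at the first blocked row, propagate it by inheritance, and obtain a directional label on a cell of $I$ at the last row of the gap. The only differences are that the paper writes out the mirror case ($|N^l_d(I)|>1$, start trends, a $\rightarrow$ key point at $(x+1,y_s^{(k+1)}-1)$) and defers your case to symmetry, and that your check that $H_{y-1}$ is the unique maximal-end upper neighbor spells out an inheritance step the paper only asserts.
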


\begin{proof}
We prove the case $|N^l_d(I)|>1$; the case $|N^r_d(I)| > 1$ is symmetric.

Let
$I=\langle x+1,y_s,y_e\rangle$
be a vertical interval whose left direct neighbors are
\[
I_j=\langle x,y_s^{(j)},y_e^{(j)}\rangle,\qquad j=1,\ldots,n,
\]
where 
$y_s^{(1)}\le y_e^{(1)}
<
y_s^{(2)}\le y_e^{(2)}
<
\cdots
<
y_s^{(n)}\le y_e^{(n)}$.

Consider two consecutive direct neighbors $I_k$ and $I_{k+1}$. All cells in
$[y_e^{(k)}+1,\; y_s^{(k+1)}-1]$ of column $x$ are blocked.

We analyze the horizontal scan on rows
\[
y^{(i)} := y_e^{(k)} + i,
\qquad
0 \le i \le y_s^{(k+1)} - y_e^{(k)}.
\]
Let
$H^{(i)}=\langle y^{(i)},x_s^{(i)},x_e^{(i)}\rangle$
denote the horizontal interval containing $(x+1,y^{(i)})$, which exists since $(x+1, y^{(i)}) \in I$.

At $i=0$, the presence of $I_k$ implies $x_s^{(0)} \le x$.

At $i=1$, column $x$ becomes blocked while column $x+1$ remains
traversable, so $x_s^{(1)} = x+1$. Hence
$\Delta_s(H^{(1)}) > 0$ and therefore
$T_s(H^{(1)})=\textsc{RIGHT}$.

For
$1 < i < y_s^{(k+1)} - y_e^{(k)}$,
the same configuration holds, so
$x_s^{(i)} = x+1$ and $\Delta_s(H^{(i)}) = 0$.
Thus the trend is inherited, yielding
$T_s(H^{(i)})=\textsc{RIGHT}$.

Finally, at
$i = y_s^{(k+1)} - y_e^{(k)}$,
the interval $I_{k+1}$ begins and column $x$ becomes traversable again.
Hence
$x_s^{(i)} \le x < x_s^{(i-1)} = x+1$,
which implies $\Delta_s(H^{(i)}) < 0$.
Since $T_s(H^{(i-1)})=\textsc{RIGHT}$ and
$x_s^{(i)} < x_s^{(i-1)}$, the horizontal key-point rule labels the
start vertex of $H^{(i-1)}$ as a $\rightarrow$ key point. Since
$H^{(i-1)}.x_s = x+1$ and
$y^{(i-1)} = y_s^{(k+1)}-1$, this key point is exactly
$(x+1,y_s^{(k+1)}-1)$.
Since this vertex lies in $I$, the interval $I$ contains a key point
and is therefore a key interval.
\end{proof}

\begin{definition}[Key-Interval Graph]
The \emph{key-interval graph} is the directed multigraph
$\mathcal{G}_{\mathcal{K}}
=(\mathcal{V}_{\mathcal{K}},\mathcal{E}_{\mathcal{K}})$, where
$\mathcal{V}_{\mathcal{K}}=\mathcal{K}$.
Each edge
$\Gamma=(I_0,\ldots,I_m)\in\mathcal{E}_{\mathcal{K}}$
is a horizontally monotone interval chain directed from the key interval
$I_0$ to the key interval $I_m$, with all intermediate intervals, if
any, being non-key. Distinct chains are retained as distinct edges even
if they have the same first and last intervals.

For each key interval $K$, let
\[
\mathcal{E}_{\mathrm{out}}(K)
=
\left\{
\Gamma=(I_0,\ldots,I_m)\in\mathcal{E}_{\mathcal{K}}
\mid I_0=K
\right\}
\]
denote its outgoing edges.
\end{definition}

To construct $\mathcal{E}_{\mathcal{K}}$, for each key interval $K$
and each direct neighbor $I_d\in N_d(K)$, we initialize
\[
\Gamma=(K,I_d),
\qquad
I_{\mathrm{curr}}=I_d,
\qquad
d=\operatorname{sign}(I_d.x-K.x).
\]
If $I_{\mathrm{curr}}$ is key, $\Gamma$ is inserted into
$\mathcal{E}_{\mathcal{K}}$ and the search terminates. Otherwise, the
search follows the unique direct neighbor $I_{\mathrm{next}}$, if it
exists, satisfying
\[
I_{\mathrm{next}}.x-I_{\mathrm{curr}}.x=d.
\]
Whenever such an interval exists, it is appended to $\Gamma$ and becomes
the new $I_{\mathrm{curr}}$. The search terminates when a key interval is
reached, in which case $\Gamma$ is inserted into
$\mathcal{E}_{\mathcal{K}}$, or when no continuation exists, in which
case no edge is added.

By the contrapositive of Theorem~\ref{thm:key-point}, every non-key
interval has at most one direct neighbor on its left and at most one on
its right. Hence, $I_{\mathrm{next}}$ is unique whenever it exists.
Consequently, the non-key intervals form disjoint non-branching chains
under the direct-neighbor relation. We call each maximal such chain a
\emph{non-key component}.

The non-key intervals traversed by each directional search are assigned
to the same component. After all searches have been processed, a final
linear pass assigns any remaining non-key interval to its maximal
component. Such remaining components are not
reachable from the key-interval graph.

To interface non-key components with the key-interval graph used for
search, we define their boundary key intervals. For each non-key
component $C$, let
\[
\mathcal{B}(C)
=
\{K\in\mathcal{K}\mid \exists I\in C,\ K\in N_d(I)\}.
\]
These intervals serve as interfaces between $C$ and the
key-interval graph. We write
\[
\overline{C}=C\cup\mathcal{B}(C)
\]
for the closure of $C$. A vertex $v\in V$ is said to lie in
$\overline{C}$ if the vertical interval containing $v$ belongs to
$\overline{C}$.

For any two distinct intervals $I_a,I_b\in\overline{C}$, let
$\Gamma_C(I_a,I_b)$
denote the unique horizontally monotone interval chain contained in
$\overline{C}$ that begins at $I_a$, ends at $I_b$, and has all
intermediate intervals, if any, in $C$. Its uniqueness follows from the
non-branching structure of $C$.

\begin{definition}[Non-Key Context]
\label{def:context}
Let $s, t\in V$ be two vertices. We say that $s$ and $t$ share
a \emph{non-key context} if there exists a non-key component $C$ such
that
$s\in\overline{C}$
and
$t\in\overline{C}$.

\end{definition}

\begin{lemma}[Manhattan Feasibility in Non-Key Context]
\label{lem:context-manhattan}
Let $s,t \in V$ be two vertices. If $s$ and $t$ share a non-key context, there exists a Manhattan-feasible path from $s$
to $t$.
\end{lemma}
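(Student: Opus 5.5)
We argue on the unique chain joining the two intervals. Let $I_a$ and $I_b$ be the vertical intervals containing $s$ and $t$; both lie in $\overline{C}$. If $I_a=I_b$, the straight vertical segment inside the interval is a Manhattan-feasible path. Otherwise take $\Gamma_C(I_a,I_b)=(J_0,\ldots,J_m)$. Reversing a path preserves its length and $\mathcal{C}_M$ is symmetric, so we may swap $s$ and $t$ and assume the chain runs left to right ($d=+1$). We may likewise assume $s.y\le t.y$ (the case $s.y\ge t.y$ is symmetric under exchanging $\uparrow$ with $\downarrow$).

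\textbf{Reduction.} For $0\le i<m$ let $O_i=[J_i.y_s,J_i.y_e]\cap[J_{i+1}.y_s,J_{i+1}.y_e]\neq\emptyset$ be the set of rows in which the horizontal step $J_i\to J_{i+1}$ can be taken. A path that crosses from column $J_i.x$ to $J_{i+1}.x$ in row $y_i\in O_i$ and moves vertically inside each $J_i$ in between is valid. It has length exactly $\mathcal{C}_M(s,t)$ if and only if
\[
s.y\le y_0\le y_1\le\cdots\le y_{m-1}\le t.y .
\]
The proof therefore reduces to showing that such a monotone choice of crossing rows exists.

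\textbf{Greedy construction.} Set $y_{-1}=s.y$ and $y_i=\max(y_{i-1},\min O_i)$. This sequence is nondecreasing, and it stays feasible as long as $y_{i-1}\le\max O_i$. At the end we also need $y_{m-1}\le t.y$, which holds because $t.y\le J_m.y_e$ and the final intervals must admit $t$.

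\textbf{Obstruction.} Failure means some lower end of an overlap lies strictly above the running value. Concretely, the end boundaries $y_e$ along the chain contract upward: some $J_{i+1}.y_e<J_i.y_e$. Since $t.y\ge s.y$ and $t\in J_m$, the end boundary must later expand downward again, and the crossing rows we were forced into would then have to come back down.

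\textbf{Ruling out the obstruction.} We show such a contraction followed by an expansion produces an $\uparrow$ key point in an intermediate interval, contradicting that $J_1,\ldots,J_{m-1}\in C$ are non-key. Every non-key interval has at most one left neighbour by the contrapositive of Theorem~\ref{thm:key-point}. Hence for $1\le i\le m-1$ we have $I^l_{\max}(J_i)=J_{i-1}$, so the end trend $T_e$ along the chain is computed purely from the chain's own boundaries. The argument then proceeds in three steps.
\begin{itemize}
\item A strict decrease in $y_e$ sets $T_e=\textsc{UP}$.
\item Equal steps inherit $\textsc{UP}$.
\item The first later strict increase at $J_{j+1}$ means $I^r_{\max}(J_j).y_e=J_{j+1}.y_e>J_j.y_e$ with $T_e(J_j)=\textsc{UP}$. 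This labels $J_j$'s end vertex $\uparrow$.
\end{itemize}
If $1\le j\le m-1$, this contradicts the non-key property.

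\textbf{Main obstacle.} The hard part is the boundary cases where the contraction or the expansion involves the key intervals $J_0$ or $J_m$ themselves, whose neighbours may branch. We would handle these by the precise placement of $s$ and $t$: the vertical moves inside $J_0$ and $J_m$ are free, so only the overlaps $O_0$ and $O_{m-1}$ constrain us. We then check directly that $s.y\le t.y$, together with $s\in J_0$ and $t\in J_m$, leaves a nonempty monotone choice in those two overlaps. The delicate point is verifying that the running value never exceeds $\max O_i$ at the interior steps. This again uses $T_e$, together with the symmetric argument on $T_s$ and $\downarrow$ labels for the lower bound $\min O_i$ exceeding $t.y$.
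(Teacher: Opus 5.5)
Your route is the paper's: reduce to a nondecreasing choice of crossing rows, build it greedily, and rule out failure with the trend argument. That argument runs as follows: a strict contraction of $y_e$ into a non-key interval sets $T_e=\textsc{UP}$, equal or further contracting steps keep it, and the first later strict expansion puts an $\uparrow$ key point on a non-key interval. The proof is not complete, however, and the missing piece is not where you locate it.

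You never establish the lower-boundary fact $J_i.y_s\le t.y$ for all $i$, which is equivalent to the greedy running row never exceeding $t.y$. Your justification for $y_{m-1}\le t.y$ (``$t.y\le J_m.y_e$ and the final intervals must admit $t$'') bounds neither $\min O_i=\max(J_i.y_s,J_{i+1}.y_s)$ for $i<m-1$ nor even $J_{m-1}.y_s$. That bound needs the $\downarrow$ version of your trend argument, run at level $t.y$. The obstruction step also silently needs it. After a contraction $J_{i+1}.y_e<y_{i-1}$, a later strict expansion exists only because $J_m.y_e\ge t.y\ge y_{i-1}$. Your stated reason, $t.y\ge s.y$, is insufficient once earlier overlaps have pushed the running row above $s.y$.

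With that bound in hand, the endpoint difficulty you flag largely disappears. It also cannot be settled by the ``direct check'' you propose. For $m\ge 2$, whether $O_0$ contains a row $\ge s.y$ depends on $J_1.y_e$, which $s\in J_0$ and $t\in J_m$ do not control. That case is handled by your interior argument, since the contraction lands in the non-key interval $J_1$. The only genuine endpoint case is a contraction at the last step into $J_m$, which is excluded by $J_m.y_e\ge t.y\ge y_{m-2}$.

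The paper packages all of this as one level-$\eta$ boundary claim, valid for any subchain with non-key interior: if both end intervals satisfy $y_s\le\eta$ (resp.\ $y_e\ge\eta$), then every interval does. The reason is that the first excursion past $\eta$ and the subsequent turn are forced strictly inside the subchain, where intervals have unique left and right neighbours. So the trend computation never touches the possibly branching key intervals at the ends. The paper applies the claim in two stages:
\begin{itemize}
\item first at level $t.y$, giving $\ell_i\le t.y$ and hence $s.y\le r_i\le t.y$ throughout;
\item then at level $r_i$ on the subchain $(I_i,\ldots,I_m)$, which shows $r_{i+1}\le u_i$ at every step.
\end{itemize}
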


\begin{proof}
Let $I_s$ and $I_t$ be the vertical intervals containing $s$ and $t$.
If $I_s=I_t$, the vertical path between them is Manhattan-feasible.
Otherwise, by the definition of non-key context, there exists a non-key
component $C$ such that $s,t\in\overline{C}$. Let
\[
\Gamma_C(I_s,I_t)=(I_0,\ldots,I_m)
\]
be the horizontally monotone interval chain from $I_s$ to $I_t$ inside
$\overline{C}$.
Assume by symmetry that it proceeds to the right.
Write
$I_i=\langle x^{(i)},y_s^{(i)},y_e^{(i)}\rangle$,
and prove the case $s.y\le t.y$; the case $s.y>t.y$ is symmetric.

We first establish a boundary claim. Consider any subchain
$I_p,\ldots,I_q$ whose intermediate intervals are non-key. For any row
level $\eta$, if
$y_s^{(p)}\le \eta$ and $y_s^{(q)}\le \eta$, then
$y_s^{(i)}\le \eta$ for all $p\le i\le q$.
Indeed, suppose otherwise. Let $j$ be the first index with
$y_s^{(j)}>\eta$, and let $k\ge j$ be the first index such that
$y_s^{(k+1)}<y_s^{(k)}$; such $k$ exists because the boundary later
returns to at most $\eta$. Since $k$ is the first index after $j$ with
$y_s^{(k+1)}<y_s^{(k)}$, the lower boundary does not decrease from
$I_j$ to $I_k$; each step either sets the start trend to
$\textsc{DOWN}$ again or inherits it. Thus $T_s(I_k)=\textsc{DOWN}$.
Since its right direct neighbor $I_{k+1}$ satisfies
$y_s^{(k+1)}<y_s^{(k)}$, the key-point rule would label a
$\downarrow$ key point in $I_k$, contradicting that $I_k$ is non-key.
Thus the lower boundary cannot rise above level $\eta$ and later return.

Symmetrically, if $y_e^{(p)}\ge \eta$ and $y_e^{(q)}\ge \eta$, then
$y_e^{(i)}\ge \eta$ for all $p\le i\le q$; otherwise an $\uparrow$ key
point would be induced.

Applying the claim to the whole chain with level $t.y$ for the lower
boundary and level $s.y$ for the upper boundary gives
$y_s^{(i)}\le t.y$ and $y_e^{(i)}\ge s.y$ for every $i$. For each
adjacent pair $I_i,I_{i+1}$, let
\[
O_i=[\ell_i,u_i]
=
[\max(y_s^{(i)},y_s^{(i+1)}),
 \min(y_e^{(i)},y_e^{(i+1)})]
\]
be their row-overlap interval. Since the intervals are direct neighbors,
$O_i$ is nonempty, and the boundary claim above implies $\ell_i\le t.y$ and
$u_i\ge s.y$.

Set $r_0=s.y$ and define
\[
r_{i+1}=\max\{r_i,\ell_i\}
\]
for $0\le i<m$. We show that each $r_{i+1}$ is a valid crossing row
between $I_i$ and $I_{i+1}$, i.e., $r_{i+1}\in O_i$.

First, since $r_0=s.y$ and $\ell_i\le t.y$ for all $i$, an immediate
induction gives
\[
s.y\le r_i\le t.y
\qquad\text{for all } i.
\]
Moreover, by definition, $r_{i+1}\ge \ell_i$. It remains to show that
$r_{i+1}\le u_i$.

Suppose, for contradiction, that $r_{i+1}>u_i$. Since
$r_{i+1}=\max\{r_i,\ell_i\}$ and $\ell_i\le u_i$, we have $r_i>u_i$.
Because
$r_i$ is a valid row of $I_i$, we have $r_i\le y_e^{(i)}$. Since
$u_i=\min\{y_e^{(i)},y_e^{(i+1)}\}$, the inequalities
$r_i>u_i$ and $r_i\le y_e^{(i)}$ imply $u_i=y_e^{(i+1)}$, and hence
\[
y_e^{(i+1)}<r_i\le y_e^{(i)}.
\]
Moreover, since $r_i\le t.y$ and $t\in I_m$, we have
$y_e^{(m)}\ge t.y\ge r_i$. Applying the boundary
claim to the subchain $I_i,\ldots,I_m$ with level $\eta=r_i$ gives
$y_e^{(i+1)}\ge r_i$, contradicting $y_e^{(i+1)}<r_i$. Hence
$r_{i+1}\le u_i$, and therefore $r_{i+1}\in O_i$.

We can therefore construct a path from $s$ to $t$ as follows. For
each $0\le i<m$, the path first moves vertically inside $I_i$ from row
$r_i$ to row $r_{i+1}$, and then moves horizontally to $I_{i+1}$ at row
$r_{i+1}$. After reaching $I_m$, it moves vertically inside $I_m$ from
row $r_m$ to $t.y$. All horizontal moves have the same direction, and the row sequence
is nondecreasing. Thus the horizontal cost is $|t.x-s.x|$, the vertical
cost is $|t.y-s.y|$, and the total length is $\CM(s,t)$. Hence the
constructed path is Manhattan-feasible.
\end{proof}

\begin{definition}[Transition Vertex]
Let $K=\langle x,y_s,y_e\rangle$ be a key interval, and let
$\sigma\in\{l,r\}$ denote one side of $K$. Let
$I_1,\ldots,I_q$ be the intervals in $N_d^\sigma(K)$ on one side of $K$,
sorted by start coordinate. For two distinct same-side neighbors
$I_\alpha$ and $I_\beta$, assume $\alpha<\beta$. A key point $p=(x,y)$
of $K$ is a \emph{transition vertex} between them if, for some
$\alpha\le i<\beta$,
\[
I_i.y_e<y<I_{i+1}.y_s .
\]
\end{definition}

Each separating gap contains a key point by
Theorem~\ref{thm:key-point}. We store one representative key point for
each gap between consecutive same-side neighbors. For
$I_\alpha$ and $I_\beta$ with $\alpha<\beta$,
$K.T(I_\alpha,I_\beta)$ returns the representative of the
lowest-index separating gap, namely the gap between
$I_\alpha$ and $I_{\alpha+1}$.

As shown in Figure~\ref{fig:preprocess}, the vertex $(2,4)$ is a
transition vertex for the gap between $\langle 3,0,2\rangle$ and
$\langle 3,5,7\rangle$ via the key interval $\langle 2,0,7\rangle$.
For the gap between $\langle 13,0,0\rangle$ and $\langle 13,7,7\rangle$
via $\langle 14,0,7\rangle$, $(14,1)$ is selected
as the representative.

\section{Key-Interval A*}

Key-Interval A* (KIA*) performs A*-style search over the precomputed
key-interval graph $\mathcal{G}_{\mathcal{K}}$.

\subsection{Algorithm Description}

Each nonterminal search state $c$ is associated with a key interval
$c.K$ and represents an arrival-cost profile over that interval. It
stores cost values $g(c)$ and $h(c)$, an ordered waypoint sequence
$W(c)$, and an incoming interval
\[
c.N_{\mathrm{in}}\in N_d(c.K)\cup\{\bot\}.
\]
The search priority of every state is
$f(c)=g(c)+h(c)$.
A terminal state has $c.K=c.N_{\mathrm{in}}=\bot$ and a waypoint
sequence ending at $t$.

The waypoint sequence records only the vertices required to preserve
Manhattan-feasible connectivity. For a nonterminal state, if
$c.N_{\mathrm{in}}=\bot$, the state originates inside $c.K$;
otherwise, $c.N_{\mathrm{in}}$ is the direct neighbor through which
$c.K$ is entered. This incoming interval is used during expansion to
determine whether a transition vertex is required.

For path reconstruction, every noninitial state stores a parent
pointer. Each nonterminal successor also stores the graph edge used to
reach it. An initial state stores its initialization chain when
$I_s\notin\mathcal{K}$, and a terminal state stores the chain from the
last expanded key interval to $I_t$.

\paragraph{Initialization}
Let $I_s$ and $I_t$ denote the vertical intervals containing $s$ and
$t$, respectively. If $s$ and $t$ share a non-key context, KIA*
constructs a Manhattan-feasible path between them by
Lemma~\ref{lem:context-manhattan} and terminates.

Otherwise, for each $q\in\{s,t\}$, set
$\mathcal{K}_q=\{I_q\}$ if $I_q\in\mathcal{K}$; otherwise, let $C_q$
be the non-key component containing $I_q$ and set
$\mathcal{K}_q=\mathcal{B}(C_q)$.
The sets $\mathcal{K}_s$ and $\mathcal{K}_t$ identify the key intervals
used to initialize the search and those from which the final connection
to $t$ is attempted, respectively.

If $I_s\in\mathcal{K}$, an initial state $c$ with
$c.K=I_s$, $c.N_{\mathrm{in}}=\bot$, and
$W(c)=\langle s\rangle$ is inserted into the open list. Otherwise, for
each $K\in\mathcal{K}_s$, let
$\Gamma_s=\Gamma_{C_s}(I_s,K)$. An initial state $c$ with
$c.K=K$,
$c.N_{\mathrm{in}}=\operatorname{entry}(\Gamma_s)$, and
$W(c)=\langle s\rangle$ is inserted into the open list. By
Lemma~\ref{lem:context-manhattan}, $s$ is Manhattan-feasible with every
vertex of $K$, so each initial state represents a valid arrival-cost
profile over its key interval.

\paragraph{Search}
At each iteration, the state with the smallest $f$-value is extracted
from the open list. If its last waypoint is $t$, the path reconstructed
from its waypoint sequence as described later is returned. Otherwise, the state is expanded
as described in the next section. Successor states record their incoming
intervals, update their waypoint sequences to preserve
Manhattan-feasible connectivity, and are inserted into the open list. The
search terminates when a terminal state ending at $t$ is extracted, or
when the open list becomes empty.

\subsection{Node Expansion}

Consider the expansion of a state $c$ associated with the key interval
$K=c.K$.

If $K\notin\mathcal{K}_t$, then for each
$\Gamma=(I_0,\ldots,I_m)\in\mathcal{E}_{\mathrm{out}}(K)$, KIA*
generates a candidate successor $c'$ with $c'.K=I_m$,
$c'.N_{\mathrm{in}}=\operatorname{entry}(\Gamma)$, and
$N_{\mathrm{out}}=\operatorname{exit}(\Gamma)$.

If $K\in\mathcal{K}_t$, KIA* instead generates a candidate terminal
successor $c'$ with $c'.K=c'.N_{\mathrm{in}}=\bot$. If $I_t=K$, set $N_{\mathrm{out}}=\bot$. Otherwise, $t$ lies
in a non-key component $C_t$, and set
$N_{\mathrm{out}}=\operatorname{exit}(\Gamma_{C_t}(K,I_t))$.

In either case, the candidate is discarded if
$c.N_{\mathrm{in}}\neq\bot$ and
$N_{\mathrm{out}}=c.N_{\mathrm{in}}$. Such a candidate leaves $K$
through the same direct neighbor from which $K$ was entered. The detour
crosses between $c.N_{\mathrm{in}}$ and $K$ twice and can be replaced by
the vertical segment between the same crossing rows within
$c.N_{\mathrm{in}}$, strictly reducing its length.

Any remaining candidate initializes its waypoint sequence as
$W(c')=W(c)$. Let $w_n$ be the last waypoint in this sequence.
The sequence $W(c')$ is then updated according to the following priority
rules:

\begin{itemize}
\item If $K.T(c.N_{\mathrm{in}},N_{\mathrm{out}})$ is defined, append it to $W(c')$.

\item Otherwise, if $K$ contains an $\uparrow$ key point $K.\text{up}$
and $w_n.y>K.\text{up}.y$, append $K.\text{up}$ to $W(c')$.

\item Otherwise, if $K$ contains a $\downarrow$ key point
$K.\text{down}$ and $w_n.y<K.\text{down}.y$, append
$K.\text{down}$ to $W(c')$.
\end{itemize}
Finally, if $K\in\mathcal{K}_t$, append $t$ to $W(c')$. For every $K\in\mathcal{K}_t$, every vertex of $K$ is
Manhattan-feasible with $t$, either because $K=I_t$ or by
Lemma~\ref{lem:context-manhattan}. Hence leaving $K$ through another
graph edge cannot yield a shorter completion.

The values $g(c')$, $h(c')$, and $f(c')$ are computed according to the
state-evaluation rules described later. A terminal successor is inserted
directly into the open list, whereas a nonterminal successor is inserted
unless it is dominated.

The following lemmas justify the waypoint update rules.
Lemma~\ref{lem:waypoint-feasibility} shows that each appended waypoint
is Manhattan-feasible with the preceding waypoint, while
Lemma~\ref{lem:pruning-safety} shows that inserting the selected key
point or transition vertex does not eliminate an optimal continuation.

\begin{lemma}[Waypoint Feasibility]
\label{lem:waypoint-feasibility}
Suppose an expansion appends a waypoint $w$, where $w$ is
either a key point of the expanded key interval or the target vertex
$t$, and let $w_n$ be the preceding waypoint. Then $w_n$ and
$w$ are Manhattan-feasible, and hence
$\C(w_n,w)=\CM(w_n,w)$.
\end{lemma}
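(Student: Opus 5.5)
The plan is an induction over the sequence of expansions along a search branch. The invariant is: whenever a state $c$ with nonterminal key interval $c.K$ is generated, its last waypoint $w_n$ is Manhattan-feasible with \emph{every} vertex of $c.K$ (the arrival-cost profile is valid). Moreover, every appended waypoint is Manhattan-feasible with its predecessor. For initial states the invariant is already established in the Initialization paragraph. If $I_s=K$, a vertical segment suffices; otherwise Lemma~\ref{lem:context-manhattan} applies, since $s$ and every vertex of $K\in\mathcal{B}(C_s)$ lie in $\overline{C_s}$. The lemma itself then follows from the invariant at the moment $c$ is expanded: the appended key point $w\in K$ is Manhattan-feasible with $w_n$.

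For the target, I will split into two cases according to whether a key point is appended before $t$ or not. If no key point is appended, then $w_n$ is the state's last waypoint, which is Manhattan-feasible with every vertex of $K$. Since $K\in\mathcal{K}_t$, every vertex of $K$ is Manhattan-feasible with $t$, either trivially when $K=I_t$ or by Lemma~\ref{lem:context-manhattan}. I then need to compose these two facts. Concretely, choose the vertex $v\in K$ whose row is the clamp of the interval between $w_n.y$ and $t.y$ to $[K.y_s,K.y_e]$. Concatenating the two Manhattan paths through $v$ is monotone in both coordinates provided $K.x$ lies between $w_n.x$ and $t.x$ and that clamp equals a row between them. The case in which the clamp moves outside $[\min(w_n.y,t.y),\max(w_n.y,t.y)]$ is precisely where the waypoint rules must have fired.

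The main obstacle is propagating the invariant to the successor. Given that $w_n$ is feasible with all of $K$, I must show that the new last waypoint is feasible with all of the successor interval $I_m$ reached via the edge $\Gamma$. In the case where no waypoint is appended, the rules guarantee that $w_n.y$ lies between $K.\text{down}.y$ and $K.\text{up}.y$, or that these key points are absent, and that no transition vertex is required. I then need to reproduce the crossing-row construction from the proof of Lemma~\ref{lem:context-manhattan} for the chain from $w_n$'s interval through $K$ along $\Gamma$ to $I_m$. To do so, I would re-prove the boundary claim for the concatenated chain: the only place where the lower boundary can rise and return is at a $\downarrow$ key point of $K$, which is excluded when $w_n.y\ge K.\text{down}.y$. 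The symmetric argument handles $\uparrow$.

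When a key point or transition vertex $p$ is appended, $p$ plays the role of the new source. Because $p$ lies exactly at the inflection, for instance the topmost free row of the gap adjacent to $N_{\mathrm{out}}$, the same boundary-claim argument applies from $p$ onward. I would first try to handle the transition-vertex case by noting that $p$ is adjacent to the gap and therefore on the correct side of the $N_{\mathrm{out}}$ overlap. I expect this case analysis, and in particular the sign conventions for $\uparrow$ versus $\downarrow$ across left-going and right-going edges, to be the delicate part.
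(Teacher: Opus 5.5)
Your core mechanism matches the paper's. Along the interval chain followed since $w_n$ was appended, any rise-and-return of a boundary forces a key point whose waypoint rule would have fired. The crossing-row construction of Lemma~\ref{lem:context-manhattan} then gives a path of length $\CM(w_n,w)$.

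\textbf{The gap.} The induction you wrap around this mechanism is not inductive. The invariant ``$w_n$ is Manhattan-feasible with every vertex of $c.K$'' records nothing about the chain the search actually followed. It does not even say on which side of $K$ the waypoint $w_n$ lies. Yet your propagation step and your target case both need two facts about the whole concatenated chain $\Gamma^\ast$ since $w_n$:
\begin{itemize}
\item $\Gamma^\ast$ must be horizontally monotone. This needs the immediate-backtrack discard in addition to the transition-vertex rule. You never invoke the discard, and without it a reversal goes undetected, because $K.T$ is undefined when $N_{\mathrm{out}}=c.N_{\mathrm{in}}$.
\item No boundary peak may occur at \emph{any} interior key interval of $\Gamma^\ast$, not only at $K$. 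Your sentence ``the only place where the lower boundary can rise and return is at a $\downarrow$ key point of $K$'' does not follow from the invariant. Earlier key intervals are excluded only because they were also expanded with last waypoint $w_n$ and appended nothing.
\end{itemize}
The same holds for your assertion that the rules ``must have fired'' when $K$ misses the row span between $w_n$ and $t$. The assertion is true, but proving it requires the valley argument over all of $\Gamma^\ast$ plus the terminal chain, using that the valley end cannot be a non-key interval. Once these facts are stated, you are running the paper's direct argument on $\Gamma^\ast$, and the induction adds nothing.

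\textbf{The case you flag as delicate is easy.} When a key point $p\in K$ is appended, $K$ and $I_m$ are endpoints of a single edge. So every vertex of $K$ is Manhattan-feasible with every vertex of $I_m$. For $m\ge 2$ this is Lemma~\ref{lem:context-manhattan}. For $m=1$, the overlap of two adjacent intervals always meets the row range between any of their vertices. Hence $p$'s exact position is irrelevant. It also lies in the lowest-index gap, not necessarily the one adjacent to $N_{\mathrm{out}}$.

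\textbf{Where care is actually needed.} The no-waypoint case is the hard one; take the orientation with $w_n.y\le w.y$. The step $r_{i+1}\le u_i$ of Lemma~\ref{lem:context-manhattan} applies the upper-boundary claim at level $r_i$, not at $w_n.y$. With key intervals inside $\Gamma^\ast$, an $\uparrow$ key point at a row in $[w_n.y,r_i)$ triggers no rule. So ``the symmetric argument'' does not apply.

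Argue instead as follows. If $r_{i+1}>u_i$, then $r_i>y_e^{(i+1)}$, so some $y_s^{(p)}$ with $1\le p\le i$ exceeds $y_e^{(i+1)}\ge w_n.y$. At the same time, $y_s^{(0)}\le w_n.y$ and $y_s^{(i+1)}\le y_e^{(i+1)}$. The lower boundary therefore rises above level $y_e^{(i+1)}$ and returns. The interior peak then carries a $\downarrow$ key point at a row greater than $w_n.y$, whose rule (or a higher-priority one) would have fired. The paper's phrase ``exactly the conditions used'' glosses over this point as well.
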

\begin{proof}
If $w_n$ and $w$ lie in the same vertical interval, the vertical
segment between them is Manhattan-feasible.

Otherwise, let
$\Gamma^\ast=(I_0,\ldots,I_m)$ be the interval sequence obtained by
concatenating, in traversal order, the interval chains followed since
$w_n$ was appended. If $w=t$, the sequence also includes the
terminal chain to $I_t$.

Since no waypoint is appended between $w_n$ and $w$,
$\Gamma^\ast$ is horizontally monotone. Indeed, any horizontal reversal
would enter and leave some key interval through direct neighbors on the
same side. Returning through the same neighbor is discarded during
expansion, whereas using distinct same-side neighbors would append a
transition vertex.

Assume by symmetry that $\Gamma^\ast$ proceeds to the right and that
$w_n.y\le w.y$. Write
$I_i=\langle x^{(i)},y_s^{(i)},y_e^{(i)}\rangle$. We claim:
\[
y_s^{(i)}\le w.y
\quad\text{and}\quad
y_e^{(i)}\ge w_n.y
\qquad\text{for all }i.
\]
If the lower boundary rose above $w.y$ and later returned, the
boundary argument in Lemma~\ref{lem:context-manhattan} would induce a
$\downarrow$ key point above $w_n.y$. The waypoint-update rules would
then append either that key point or a higher-priority transition
vertex, contradicting the choice of $w_n$ and $w$. The upper-bound
case is symmetric.

The claimed bounds are exactly the conditions used in the 
construction of Lemma~\ref{lem:context-manhattan}. Applying that
construction yields a path from $w_n$ to $w$ of length
$\CM(w_n,w)$. Hence
$\C(w_n,w)=\CM(w_n,w)$.
\end{proof}

\begin{lemma}[Pruning Safety]
\label{lem:pruning-safety}
Suppose the waypoint-update rules append a key point $w$ while
expanding a state $c$, and let $w_n$ be the waypoint immediately
preceding $w$.

If $w$ is a vertical key point, i.e.,
$w\in\{c.K.\text{up},c.K.\text{down}\}$, then for every vertex
$o\in c.K$ and every vertex $v$ reachable from $o$,
\[
\C(w_n,w)+\C(w,v)
\le
\C(w_n,o)+\C(o,v).
\]

If $w$ is a transition vertex for
$c.N_{\mathrm{in}}$ and $N_{\mathrm{out}}$, then any traversal of
$c.K$ that enters through $c.N_{\mathrm{in}}$ and leaves through
$N_{\mathrm{out}}$ can be replaced by one passing through $w$ without
increasing its length.
\end{lemma}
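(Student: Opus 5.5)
We treat the two parts of Lemma~\ref{lem:pruning-safety} separately. In both parts the strategy is to reduce the statement to a Manhattan-distance inequality and then use Lemma~\ref{lem:waypoint-feasibility}, which gives $\C(w_n,w)=\CM(w_n,w)$.

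\emph{Vertical key point.} Take the case $w=c.K.\text{up}$, which is appended only when $w_n.y>w.y$; the case $w=c.K.\text{down}$ is symmetric. Fix $o\in c.K$ and $v$ reachable from $o$, and let $P$ be a shortest $o$--$v$ path. Since $\C(w_n,o)\ge\CM(w_n,o)$, it suffices to show
\[
\CM(w_n,w)+\C(w,v)\le \CM(w_n,o)+\C(o,v).
\]

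If $o.y\ge w.y$, then $o$ lies in $K$ between rows $w.y$ and $K.y_e$. Also $w$ is the end vertex of $K$, so $w.y=K.y_e\ge o.y$ and hence $o.y=w.y$, i.e.\ $o=w$. This case is therefore trivial.

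The generic situation is that $o$ lies strictly above $w$ in $K$. By construction $w$ is the bottom endpoint of $K$: the right neighbour extends further down ($I^r_{\max}.y_e>w.y$), and the trend $T_e=\textsc{UP}$ means the lower boundary has been rising from the left. The plan is to exploit this blocked region. Since $w_n$ was reached through the interval chains since it was appended (cf.\ Lemma~\ref{lem:waypoint-feasibility}), the route from $w_n$ to the region to the right of $K$ is funnelled through the bottom corner $w$. Concretely:
\begin{itemize}
\item decompose $\CM(w_n,o)=\CM(w_n,w)+\CM(w,o)$ when $w$ lies in the bounding box of $w_n$ and $o$ (this holds since $o$ is in the same column and above $w$, and $w_n$ is below $w$);
\item use the triangle inequality $\C(w,v)\le\C(w,o)+\C(o,v)=\CM(w,o)+\C(o,v)$, where $\C(w,o)=\CM(w,o)$ because $w$ and $o$ share the vertical interval $K$.
\end{itemize}
Summing gives the inequality. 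The main obstacle is the bounding-box claim: $w_n.y>w.y$ is given, but $w_n$ and $o$ may not be horizontally arranged so that $w$ lies between them. Since $w$ and $o$ share the column $K.x$, the horizontal components agree, and the vertical component works because $w.y$ lies between $o.y$ and $w_n.y$. So the identity $\CM(w_n,o)=\CM(w_n,w)+\CM(w,o)$ actually holds directly, and the expected obstacle largely dissolves.

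\emph{Transition vertex.} Let the traversal enter $K$ from $I_\alpha=c.N_{\mathrm{in}}$ at row $a$ and leave to $I_\beta=N_{\mathrm{out}}$ at row $b$, with $\alpha<\beta$ on the same side $\sigma$. Its portion inside $K$ then has length at least $|a-b|$. The representative $w=(x,y)$ satisfies $I_\alpha.y_e<y<I_{\alpha+1}.y_s\le I_\beta.y_s$. Since $a\le I_\alpha.y_e$ and $b\ge I_\beta.y_s$, we get $a<y<b$. We replace the inside portion by the vertical segment from row $a$ to row $b$ in $K$. This segment has length $b-a$, no longer than the original, and it passes through $w$. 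The only point to check is that the inside portion stays within $K$ up to the crossings; any excursion out and back can only lengthen it, so the vertical replacement is no longer.
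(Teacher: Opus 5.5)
Your proposal is correct and follows essentially the same argument as the paper. For the vertical key point you combine $\C(w_n,w)=\CM(w_n,w)$ from Lemma~\ref{lem:waypoint-feasibility}, the exact decomposition $\CM(w_n,o)=\CM(w_n,w)+\CM(w,o)$ (valid because $o.y\le w.y<w_n.y$ in the same column), and the triangle inequality through the in-interval segment; for the transition vertex you use the vertical segment in $K$, which passes through $w$ because $a\le I_\alpha.y_e<w.y<I_\beta.y_s\le b$. The ``funnelling'' discussion and the separate case $o.y\ge w.y$ are unnecessary, as you note yourself, since the decomposition holds directly for every $o\in K$.
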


\begin{proof}
Let $K=c.K$. First suppose $w=K.\text{up}$; the case
$w=K.\text{down}$ is symmetric. By the update rule,
$w_n.y>w.y$. Since $w$ is the end vertex of $K$, every $o\in K$
satisfies $o.y\le w.y$, and therefore
\[
\CM(w_n,o)=\CM(w_n,w)+\CM(w,o).
\]
By Lemma~\ref{lem:waypoint-feasibility},
$\C(w_n,w)=\CM(w_n,w)$. Moreover, the vertical segment between $w$
and $o$ lies entirely in $K$, so $\C(w,o)=\CM(w,o)$. Hence, by the
triangle inequality,
\begin{align*}
\C(w_n,w)+\C(w,v)
&\le \C(w_n,w)+\C(w,o)+\C(o,v)\\
&= \CM(w_n,w)+\CM(w,o)+\C(o,v)\\
&= \CM(w_n,o)+\C(o,v)\\
&\le \C(w_n,o)+\C(o,v).
\end{align*}
Thus routing through $w$ is no more costly than routing through any
other vertex $o\in K$.

Now suppose that $w$ is a transition vertex. Let
$I_a=c.N_{\mathrm{in}}$ and $I_b=N_{\mathrm{out}}$. By definition,
$I_a$ and $I_b$ are distinct direct neighbors on the same side of $K$,
and $w$ lies in a transition gap separating them. Assume without loss
of generality that
\[
I_a.y_e<w.y<I_b.y_s.
\]

Consider a traversal that enters $K$ from $I_a$ at row $r_a$ and
leaves toward $I_b$ at row $r_b$. Since the crossing rows lie in the
corresponding overlaps,
\[
r_a\le I_a.y_e<w.y<I_b.y_s\le r_b.
\]
The vertical segment in $K$ from $(K.x,r_a)$ to $(K.x,r_b)$ is
traversable, passes through $w$, and has length
$r_b-r_a$, which is the Manhattan distance between its endpoints.
It therefore replaces the original traversal without increasing its
length. The entry and exit crossings remain unchanged, so the same
prefix and suffix paths remain available. Hence inserting $w$ is safe.
\end{proof}

\subsection{State Dominance and Cost Evaluation}
\label{sec:cost_dominance}

To preserve optimality, KIA* represents the arrival cost over an entire
key interval analytically. Let $c$ be a nonterminal search state
associated with a key interval $K=c.K$, and let
$W(c)=\langle w_1,\ldots,w_n\rangle$ be its waypoint sequence. We define
the projected arrival vertex on $K$ as
\[
v_{\mathrm{proj}}
=
\bigl(K.x,\max\{K.y_s,\min(w_n.y,K.y_e)\}\bigr).
\]
The projected vertex is used only for cost evaluation, dominance
pruning, and heuristic computation; it is not appended to the waypoint
sequence.

By the same reasoning as in
Lemma~\ref{lem:waypoint-feasibility}, $w_n$ is connected to
$v_{\mathrm{proj}}$ by a Manhattan-feasible path. We define the $g$-value of $c$,
which represents the cost of reaching $v_{\mathrm{proj}}$, as
\[
g(c)
=
\sum_{i=1}^{n-1}\CM(w_i,w_{i+1})
+
\CM(w_n,v_{\mathrm{proj}}).
\]
For any vertex $(K.x,y)$ on $K$, the projection vertex lies on a
Manhattan-shortest continuation from $w_n$ to $(K.x,y)$. Hence
\[
\CM(w_n,(K.x,y))
=
\CM(w_n,v_{\mathrm{proj}})
+
|y-v_{\mathrm{proj}}.y|.
\]
The cost of reaching $(K.x,y)$ through state $c$ is therefore
\[
\Phi_c(y)=g(c)+|y-v_{\mathrm{proj}}.y|.
\]

\paragraph{Dominance and Pruning}
Since multiple states may reach the same key interval, KIA* retains only
non-dominated arrival-cost profiles.

\begin{lemma}[State Dominance]
\label{lem:state-dominance}
Let $c_1$ and $c_2$ be two nonterminal states associated with the same
key interval $K$. Let $y_1$ and $y_2$ be their projected rows, and let
$g_1=g(c_1)$ and $g_2=g(c_2)$. If
\[
g_1+|y_1-y_2|\le g_2,
\]
then $c_1$ dominates $c_2$.
\end{lemma}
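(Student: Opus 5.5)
The plan is to make ``dominates'' precise as a pointwise comparison of arrival-cost profiles. Then I show that this pointwise comparison follows from the stated inequality by the triangle inequality. I interpret $c_1$ dominating $c_2$ as
\[
\Phi_{c_1}(y)\le\Phi_{c_2}(y)\qquad\text{for every row } y\in[K.y_s,K.y_e].
\]
Under this reading, any continuation from $K$ that is available to $c_2$ is also available to $c_1$ at no greater cost, so discarding $c_2$ loses no optimal solution.

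The first step is the pointwise inequality itself. By the cost formula of Section~\ref{sec:cost_dominance}, $\Phi_{c_i}(y)=g_i+|y-y_i|$. The triangle inequality on integers gives $|y-y_1|\le|y-y_2|+|y_2-y_1|$. Hence
\[
\Phi_{c_1}(y)\le g_1+|y_1-y_2|+|y-y_2|\le g_2+|y-y_2|=\Phi_{c_2}(y),
\]
where the second inequality is the hypothesis $g_1+|y_1-y_2|\le g_2$.

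The second step shows that the pointwise inequality suffices for safe pruning. Each value $\Phi_{c_i}(y)$ must be realized by an actual grid path ending at $(K.x,y)$. For $c_1$, I would argue this by concatenating three pieces:
\begin{itemize}
\item the Manhattan-feasible segments between consecutive waypoints, which exist by Lemma~\ref{lem:waypoint-feasibility};
\item the Manhattan-feasible connection from $w_n$ to $v_{\mathrm{proj}}$, justified as in that lemma;
\item the vertical segment inside $K$ from $v_{\mathrm{proj}}$ to row $y$.
\end{itemize}
Consequently, any completion from a vertex $o\in K$ to $t$ has cost at least as good via $c_1$ as via $c_2$.

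The main obstacle is that a state's future expansions depend on more than its cost profile. They also depend on $c.N_{\mathrm{in}}$, which decides whether a transition vertex is forced and whether a candidate is discarded for leaving through the entry neighbor, and on the last waypoint $w_n$, which drives the key-point rules. To handle this, I would rely on Lemma~\ref{lem:pruning-safety}. Any waypoint the rules insert for $c_1$ does not lengthen an optimal continuation through $K$. A candidate discarded because it leaves through $c_1.N_{\mathrm{in}}$ is strictly shortcut by a vertical segment in that neighbor, so a no-worse route exists. Thus the set of achievable completion costs from $c_1$ is, pointwise in the exit row, no larger than that from $c_2$. I expect this argument to rest on the profile interpretation rather than on the exact waypoint sequences.
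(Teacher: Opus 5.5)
Your proposal is correct and follows essentially the same route as the paper. You prove the pointwise bound $\Phi_{c_1}(y)\le\Phi_{c_2}(y)$ using $|y-y_1|\le|y_1-y_2|+|y-y_2|$ together with the hypothesis, and then argue that any continuation from $K$ can be reused after $c_1$. Your extra discussion of realizability and of the dependence on $c.N_{\mathrm{in}}$ and $w_n$ goes beyond the paper's proof of this lemma, which only asserts continuation reuse; the paper handles the immediate-backtrack issue in the completeness and optimality proofs by removing the round trip, at the same informal level as your sketch.
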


\begin{proof}
For any row $y\in[K.y_s,K.y_e]$, the triangle inequality gives
\[
|y-y_1|
\le
|y_1-y_2|+|y-y_2|.
\]
Therefore,
\begin{align*}
\Phi_{c_1}(y)
&=g_1+|y-y_1|\\
&\le g_1+|y_1-y_2|+|y-y_2|\\
&\le g_2+|y-y_2|\\
&=\Phi_{c_2}(y).
\end{align*}
Thus $c_1$ reaches every vertex of $K$ with cost no greater than
$c_2$. Any continuation available from a vertex of $K$ after $c_2$
can therefore be reused after $c_1$ with no greater prefix cost.
Hence $c_1$ dominates $c_2$.
\end{proof}

When a new nonterminal state is generated, KIA* compares its profile
with all retained profiles for the same key interval. The new state is
discarded if it is dominated by an existing state; otherwise, it is
retained and removes any existing states that it dominates.

\paragraph{Heuristic Evaluation}
For any $y\in[K.y_s,K.y_e]$, the triangle inequality gives
\[
|y-v_{\mathrm{proj}}.y|
+
\CM((K.x,y),t)
\ge
\CM(v_{\mathrm{proj}},t).
\]
Equality is attained at $y=v_{\mathrm{proj}}.y$. Therefore, the minimum
lower bound over the arrival-cost profile is obtained at the projected
vertex, and we define
\[
h(c)=\CM(v_{\mathrm{proj}},t),
\qquad
f(c)=g(c)+h(c).
\]
Thus $f(c)$ is an admissible lower bound on the cost of any completion
of state $c$.

\paragraph{Terminal State Evaluation}
For a terminal state $c$ with
$W(c)=\langle w_1,\ldots,w_n\rangle$ and $w_n=t$, we define
\[
g(c)
=
\sum_{i=1}^{n-1}\CM(w_i,w_{i+1}),
\qquad
h(c)=0.
\]
Thus $f(c)=g(c)$. By
Lemma~\ref{lem:waypoint-feasibility}, $g(c)$ is the exact length of the
grid path represented by $W(c)$.

\subsection{Path Construction}

Let $W=\langle w_1,\ldots,w_n\rangle$ be the waypoint sequence of the
terminal state, where $w_1=s$ and $w_n=t$. The final grid path $\pi$ is
constructed by replacing each consecutive pair $(w_i,w_{i+1})$ with a
Manhattan-feasible grid segment and concatenating the resulting
segments.

If $w_i$ and $w_{i+1}$ lie in the same vertical interval, the segment
between them is the vertical path within that interval. If they share a
non-key context $C$, the segment is constructed along
$\Gamma_C(I_{w_i},I_{w_{i+1}})$ using the crossing-row procedure from
Lemma~\ref{lem:context-manhattan}, where $I_{w_i}$ and $I_{w_{i+1}}$
are the intervals containing the two waypoints. Otherwise, the segment
follows the interval chains recorded along the corresponding search
branch, using the construction established in
Lemma~\ref{lem:waypoint-feasibility}.

Path construction therefore requires no local search; it follows the
corresponding interval chains and runs in time linear in the length of
the output path.

\section{Theoretical Properties}

This section establishes the completeness and optimality of KIA*.

\begin{theorem}[Completeness]
\label{thm:completeness}
KIA* returns a valid path from $s$ to $t$ if one exists, and reports
failure otherwise.
\end{theorem}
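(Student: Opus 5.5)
The proof has two halves: \emph{soundness}, meaning any returned waypoint sequence yields a valid grid path, and \emph{completeness proper}, meaning that if $t$ is reachable the search eventually extracts a terminal state. Termination and failure reporting then follow from finiteness.

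\textbf{Soundness and finiteness.} If KIA* terminates at initialization because $s$ and $t$ share a non-key context, Lemma~\ref{lem:context-manhattan} gives a valid path directly. Otherwise a terminal state carries $W=\langle w_1,\ldots,w_n\rangle$ with $w_1=s$ and $w_n=t$. By Lemma~\ref{lem:waypoint-feasibility}, every consecutive pair is Manhattan-feasible, and the path-construction procedure realizes each pair as a grid segment along the recorded chains. The concatenation is therefore a valid $s$--$t$ path.

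For finiteness, waypoints are appended only at key points or at $t$, so each state's $g$-value is a sum of Manhattan costs. I would argue that KIA* generates finitely many distinct profiles per key interval. Dominance keeps only non-dominated pairs $(g,y)$. Moreover, $g$ grows with every expansion that moves horizontally, since each graph edge has positive horizontal length. Combined with $f$-ordered extraction and the finite graph, this gives only finitely many states with $f$ below any bound. Hence if no terminal state exists, the open list empties and KIA* reports failure. If $t$ is unreachable, no terminal state can ever be generated, because terminal successors arise only from $K\in\mathcal{K}_t$, and this yields correct failure reporting.

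\textbf{Completeness proper.} Suppose some $s$--$t$ path $\pi$ exists, and take it shortest. I would project $\pi$ onto its sequence of vertical intervals. Collapsing maximal runs through non-key components, the sequence becomes a walk in $\mathcal{G}_{\mathcal{K}}$ that starts at some $K\in\mathcal{K}_s$ and ends at some $K'\in\mathcal{K}_t$. This works because non-key components are non-branching chains, so every passage through one is exactly a graph edge $\Gamma$ or an initialization/terminal chain. Since $\pi$ is shortest, it never re-enters $K$ through the neighbor it just left, so no step of this walk is discarded by the same-neighbor rule. I would then induct along the walk. The invariant is that for each prefix ending at key interval $K_j$, the open or closed list contains, or has contained, a state on $K_j$ whose profile $\Phi$ is pointwise at most the cost of the corresponding prefix of $\pi$. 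The invariant holds initially by the initialization rule. It is preserved by expansion, where Lemma~\ref{lem:pruning-safety} shows that the appended waypoint does not increase the cost of any continuation. It is also preserved under dominance, where Lemma~\ref{lem:state-dominance} shows that a discarded state is replaced by one with a pointwise no-larger profile.

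Because the open list only loses states by extraction, and an extracted state is expanded, the search cannot empty before generating a terminal successor from some $K'\in\mathcal{K}_t$. Such a terminal successor is always inserted and is eventually extracted, unless another terminal state is extracted first; either way a path is returned.

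\textbf{Main obstacle.} The delicate step is reconciling the induction with the incoming-interval bookkeeping. A dominating state $c_1$ may have a different $N_{\mathrm{in}}$ than the dominated $c_2$, so the same-neighbor discard rule could forbid $c_1$ from continuing along the edge $\pi$ uses. I would handle this case directly. If $c_1$'s incoming neighbor equals $\pi$'s outgoing neighbor $N$, then $c_1$ arrived from $N$, so some cheaper route reaches $N$ itself. I would then redirect the induction through that route, which reaches the next key interval via the non-key component without entering $K$. Making this exchange rigorous, so that it does not loop forever, is where I expect the real work. I would use a potential argument on $(\text{prefix cost},\text{walk length})$ to ensure progress.
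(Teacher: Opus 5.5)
Your overall structure matches the paper's proof. You get soundness from Lemma~\ref{lem:waypoint-feasibility} and cut a witness interval walk at key intervals into edges of $\mathcal{G}_{\mathcal{K}}$. You exclude immediate backtracks, carry an invariant through Lemmas~\ref{lem:pruning-safety} and~\ref{lem:state-dominance}, and argue finiteness. Taking $\pi$ shortest instead of erasing cycles is fine: shortestness rules out any $(P,J,P)$ pattern, and with Theorem~\ref{thm:key-point} this forces monotone subchains.

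\textbf{The gap is in the failure half.} You state the right claim, finitely many profiles per key interval, but your justification does not establish it. ``Finitely many states with $f$ below any bound'' only helps once some terminal state supplies the bound. When $t$ is unreachable nothing bounds $f$. Growth of $g$ along parent pointers does not stop the search from circling cycles of $\mathcal{G}_{\mathcal{K}}$ with ever larger $g$. Also, the open list loses states to dominance removal, not only to extraction. The missing idea is a potential supplied by dominance itself, which is what the paper uses. For each key interval $K$ and row $y$, let $m(K,y)$ be the minimum of $\Phi_c(y)$ over all retained (open or closed) states $c$ on $K$. Removing dominated states never raises $m$. A new state with projected row $y$ and value $g$ survives only if $g<m(K,y)$, because $\Phi_c(y)\le g$ is exactly the condition for $c$ to dominate it. Each retention therefore strictly lowers a nonnegative integer indexed by finitely many pairs $(K,y)$. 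So only finitely many states are ever retained or expanded, and the open list empties.

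\textbf{Your ``main obstacle'' closes directly, and nothing can loop.} Let $c_1$ dominate the witness state $c_2$ at $K_j$, with $c_1.N_{\mathrm{in}}=N=\operatorname{exit}(\Gamma')$, where $\Gamma'$ is the witness's next edge. Because non-key components do not branch, the edge along which $c_1$ arrived is $\Gamma'$ reversed. Hence $c_1$'s parent $p_1$ lies on $K_{j+1}$ and has already been expanded. If $c_1$ is instead an initial state, $\Gamma'$ runs back through $C_s$ to the other boundary interval, which carries its own initial state with profile $\mathcal{C}_M(s,\cdot)$. Let $r$ and $r'$ be the witness's exit row on $K_j$ and entry row on $K_{j+1}$. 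The $g$-formula and the triangle inequality for Manhattan distance give $\Phi_{p_1}(r')\le\Phi_{c_1}(r)+\mathcal{C}_M((K_j.x,r),(K_{j+1}.x,r'))$. Since $\Phi_{c_1}\le\Phi_{c_2}$ pointwise, the right-hand side is at most the witness's prefix cost at $(K_{j+1}.x,r')$. The invariant therefore advances from $j$ to $j+1$, and every redirection moves forward along the witness. You need no potential on (prefix cost, walk length). This is the content of the paper's one-line ``remove the round trip'' remark.
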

\begin{proof}
If the initial non-key-context test succeeds,
Lemma~\ref{lem:context-manhattan} constructs a valid path from $s$ to
$t$. Otherwise, every returned path is reconstructed from a terminal
state. By Lemma~\ref{lem:waypoint-feasibility}, each consecutive
waypoint pair is Manhattan-feasible. Hence every path
returned by KIA* is valid.

Now suppose that a grid path from $s$ to $t$ exists. Map its vertices to
their containing vertical intervals, remove consecutive duplicates, and
erase cycles from the resulting interval walk. This yields a simple
interval chain $\Lambda=(J_0,\ldots,J_r)$ from $I_s$ to $I_t$. The
chain is used only as a witness of reachability; the path returned by
KIA* need not follow the same grid vertices.

If $\Lambda$ contains no key interval, all its intervals belong to the
same non-key component. Thus $s$ and $t$ share a non-key context, and
the initial test succeeds by Lemma~\ref{lem:context-manhattan}.

Otherwise, let $K_1,\ldots,K_m$ be the key intervals in $\Lambda$, in
traversal order. The first belongs to $\mathcal{K}_s$ and the last to
$\mathcal{K}_t$. This is immediate when the corresponding endpoint
interval is key; otherwise, the key interval lies on the boundary of
the non-key component containing that endpoint.

For each $1\le i<m$, let $\Gamma_i$ be the subchain from $K_i$ to
$K_{i+1}$. All its intermediate intervals are non-key, and $\Gamma_i$
is horizontally monotone. Otherwise, a horizontal reversal at an
intermediate interval would make its preceding and succeeding intervals
distinct direct neighbors on the same side. By
Theorem~\ref{thm:key-point}, that intermediate interval would be key,
contradicting the choice of $K_i$ and $K_{i+1}$ as consecutive key
intervals.

During preprocessing, graph construction starts from every key interval
through each of its direct neighbors. Starting from $K_i$ along
$\Gamma_i$, it follows the unique continuation through the non-key
intermediate intervals and reaches $K_{i+1}$. Hence
$\Gamma_i\in\mathcal{E}_{\mathrm{out}}(K_i)$, and the key-interval graph
contains an edge sequence from an interval in $\mathcal{K}_s$ to one in
$\mathcal{K}_t$.

No candidate on this witness sequence is removed by
immediate-backtracking pruning. At each $K_i$, the incoming and outgoing
intervals are the intervals immediately preceding and following $K_i$
in $\Lambda$. If they were equal, $\Lambda$ would contain a subchain
$(I,K_i,I)$ and would not be simple.

KIA* initializes every interval in $\mathcal{K}_s$ and generates every
outgoing successor that is not an immediate backtrack. The waypoint
updates preserve the required continuation by
Lemmas~\ref{lem:waypoint-feasibility} and
\ref{lem:pruning-safety}. If a witness successor is removed by dominance
pruning, Lemma~\ref{lem:state-dominance} guarantees a retained state
whose arrival-cost profile is no greater at any vertex of the same key
interval. The remaining suffix can therefore be realized with no greater
cost; if its first traversal forms an immediate backtrack, removing that
round trip yields a shorter realization of the same suffix. Thus pruning
cannot eliminate all realizations of the witness sequence.

Finally, the search terminates. There are finitely many key intervals
and finitely many projected rows on each interval. For a fixed key
interval and projected row, every newly retained profile must have a
strictly smaller nonnegative integer $g$-value than the profile it
replaces. Hence only finitely many profile improvements and expansions
can occur.

Therefore, if a path exists, KIA* eventually generates and extracts a
terminal state. Conversely, if no path exists, no terminal state can be
generated because every terminal state represents a valid grid path.
The open list eventually becomes empty, and KIA* reports failure.
\end{proof}

\begin{table*}[t]
\centering
\small
\setlength{\tabcolsep}{2.5pt}
\renewcommand{\arraystretch}{1.08}
\resizebox{\textwidth}{!}{%
\begin{tabular}{|l|c|ccccc|c|cccc|cccc|}
\hline
\textbf{Benchmark}
& \textbf{Maps}
& \multicolumn{5}{c|}{\textbf{Runtime per Instance (ms)}}
& \textbf{Length}
& \multicolumn{4}{c|}{\textbf{Preprocessing Time (ms)}}
& \multicolumn{4}{c|}{\textbf{Retained Memory (MB)}} \\
\cline{3-16}
&
& \textbf{A*}
& \textbf{JPS4}
& \textbf{RSR}
& \textbf{HPA*}
& \textbf{KIA*}
& \textbf{HPA*}
& \textbf{JPS4}
& \textbf{RSR}
& \textbf{HPA*}
& \textbf{KIA*}
& \textbf{JPS4}
& \textbf{RSR}
& \textbf{HPA*}
& \textbf{KIA*} \\
\hline
Random (10\%)
& 10
& 0.348 & 0.217 & 0.394 & 0.126 & 0.450
& 1.011
& 3.682 & 34.811 & 118.847 & 61.689
& 0.10 & 13.29 & 5.39 & 20.35 \\

Random (15\%)
& 10
& 0.571 & 0.358 & 0.621 & 0.164 & 0.662
& 1.012
& 4.070 & 31.789 & 127.273 & 97.842
& 0.10 & 13.25 & 5.77 & 26.57 \\

Random (20\%)
& 10
& 0.760 & 0.496 & 0.852 & 0.214 & 0.829
& 1.011
& 4.488 & 29.713 & 125.663 & 117.380
& 0.10 & 12.59 & 5.82 & 30.51 \\

Random (25\%)
& 10
& 0.979 & 0.670 & 1.157 & 0.272 & 0.961
& 1.012
& 4.878 & 27.116 & 115.659 & 131.660
& 0.10 & 11.92 & 5.60 & 32.59 \\

Random (30\%)
& 10
& 1.216 & 0.846 & 1.410 & 0.327 & 1.018
& 1.015
& 5.207 & 25.390 & 93.492 & 129.795
& 0.10 & 11.25 & 5.18 & 32.05 \\

Random (35\%)
& 10
& 1.794 & 1.218 & 1.935 & 0.408 & 1.110
& 1.018
& 5.384 & 22.991 & 65.932 & 119.878
& 0.10 & 10.53 & 4.62 & 31.17 \\

Random (40\%)
& 10
& 1.903 & 1.293 & 1.955 & 0.406 & 0.882
& 1.017
& 4.247 & 13.065 & 28.554 & 58.559
& 0.10 & 7.48 & 3.17 & 18.35 \\

Maze
& 60
& 4.060 & 0.680 & 2.541 & 0.548 & 0.305
& 1.036
& 2.798 & 11.141 & 26.150 & 34.556
& 0.10 & 7.05 & 3.33 & 9.85 \\

Room
& 40
& 1.739 & 0.139 & 0.760 & 0.242 & 0.051
& 1.020
& 2.443 & 8.163 & 36.148 & 12.051
& 0.10 & 6.33 & 3.60 & 4.44 \\

Street
& 90
& 1.254 & 0.030 & 1.646 & 0.133 & 0.008
& 1.011
& 4.325 & 29.594 & 38.361 & 3.528
& 0.17 & 16.14 & 4.54 & 1.07 \\

BG512
& 75
& 0.485 & 0.019 & 0.391 & 0.068 & 0.004
& 1.011
& 2.115 & 2.984 & 5.933 & 1.269
& 0.10 & 4.26 & 1.64 & 0.39 \\

BGMaps
& 120
& 0.039 & 0.006 & 0.054 & 0.011 & 0.002
& 1.053
& 0.169 & 0.442 & 0.491 & 0.304
& 0.01 & 0.38 & 0.12 & 0.12 \\

Dragon Age
& 223
& 0.189 & 0.019 & 0.218 & 0.035 & 0.006
& 1.039
& 1.108 & 1.543 & 1.942 & 1.006
& 0.06 & 2.22 & 0.75 & 0.33 \\

StarCraft
& 75
& 2.155 & 0.094 & 2.127 & 0.215 & 0.031
& 1.010
& 4.195 & 17.209 & 26.960 & 6.148
& 0.17 & 12.05 & 3.89 & 2.35 \\
\hline
\end{tabular}%
}
\caption{Results on the MovingAI 2D benchmarks, with Random maps
separated by nominal obstacle density. Each Random density contains 10
maps. Runtime is the per-scenario median over five measured runs,
averaged within each map and then equally across maps. HPA* uses cluster
size 10. Length reports the mean per-instance HPA*/A* path-length ratio;
JPS4, RSR, and KIA* match A* on every evaluated scenario. Preprocessing time and retained memory are also averaged with equal
weight across maps, with online query workspace
excluded from retained memory.}
\label{tab:main_results}
\end{table*}

\begin{theorem}[Optimality]
\label{thm:optimality}
Whenever a path from $s$ to $t$ exists, KIA* returns a shortest
4-connected grid path.
\end{theorem}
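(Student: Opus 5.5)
The plan is the standard A* optimality argument, adapted to the interval-level states. There are two cases.

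\textbf{Case 1: the initial non-key-context test succeeds.} Lemma~\ref{lem:context-manhattan} gives a path of length $\CM(s,t)$. Since $\CM(s,t)\le\C(s,t)$ for every pair of vertices, this path is already shortest.

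\textbf{Case 2: the search phase.} Let $\pi^\ast$ be a shortest grid path from $s$ to $t$, of length $\C(s,t)$. I will establish two facts.

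\emph{(a) Soundness of the returned cost.} For a terminal state $c$, $g(c)$ is the exact length of the reconstructed path; this follows from Lemma~\ref{lem:waypoint-feasibility} and the terminal evaluation rule. Hence any returned path has length $g(c)\ge\C(s,t)$.

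\emph{(b) Invariant.} Until a terminal state of cost $\C(s,t)$ is extracted, the open list contains a state $c$ lying ``on an optimal realization'', with $f(c)\le\C(s,t)$. Here a nonterminal $c$ lies on an optimal realization if
\[
\min_{y}\bigl(\Phi_c(y)+\C((c.K.x,y),t)\bigr)=\C(s,t).
\]
A terminal $c$ lies on one if $g(c)=\C(s,t)$. Given the invariant, the argument closes as usual. By admissibility of $h$ (the Heuristic Evaluation paragraph), $f(c)\le\Phi_c(y)+\CM((c.K.x,y),t)\le\C(s,t)$. When the first terminal state $c_T$ is extracted, $g(c_T)=f(c_T)\le\C(s,t)$, because some open state has $f\le\C(s,t)$ and $c_T$ has minimum $f$. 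Combined with (a), this gives equality.

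\textbf{Establishing the invariant.} I reuse the witness construction from Theorem~\ref{thm:completeness}, now applied to $\pi^\ast$. Project $\pi^\ast$ onto intervals, which yields the key intervals $K_1,\ldots,K_m$ and edges $\Gamma_i\in\mathcal{E}_{\mathrm{out}}(K_i)$.

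\emph{Base case.} Each initial state $c$ with $c.K=K_1$ satisfies $\Phi_c(y)=\CM(s,(K_1.x,y))$. By Lemma~\ref{lem:context-manhattan}, or trivially when $I_s$ is key, this equals $\C(s,(K_1.x,y))$. So the state for $K_1$ lies on an optimal realization.

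\emph{Inductive step.} Suppose an optimal-realization state $c$ at $K_i$ is expanded. I show that its successor along $\Gamma_i$, or the terminal successor if $K_i\in\mathcal{K}_t$, again lies on an optimal realization. Three points are needed.

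\begin{itemize}
\item Immediate-backtrack pruning only discards strictly improvable detours, as argued in the Node Expansion paragraph.
\item Appending a waypoint does not raise the optimal completion cost. This is Lemma~\ref{lem:pruning-safety}, which covers both vertical key points and transition vertices.
\item The successor's profile is exact along the chain, $\Phi_{c'}(y)=g(c)$-prefix $+\,\C(w_n,(I_m.x,y))$. This follows from Lemma~\ref{lem:waypoint-feasibility} together with the projection identity for $\CM$.
\end{itemize}

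If the successor is discarded or later removed by dominance, Lemma~\ref{lem:state-dominance} supplies a retained state $c''$ at the same key interval with $\Phi_{c''}\le\Phi_{c'}$ pointwise. Then $c''$ also lies on an optimal realization, since its completion cost cannot exceed $\C(s,t)$ and is bounded below by it. One subtlety: the continuation of $c''$ along $\Gamma_{i+1}$ might be an immediate backtrack for $c''$. As in Theorem~\ref{thm:completeness}, such a round trip can be shortcut, contradicting optimality or giving an equal-cost alternative. Finally, a terminal successor from $K\in\mathcal{K}_t$ attains cost $\C(s,t)$, because every vertex of $K$ is Manhattan-feasible with $t$.

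\textbf{Main obstacle.} The hardest step is the pointwise exactness claim in the inductive step. The claim is that, after waypoint updates, the profile $\Phi_{c'}$ equals the best cost of reaching each vertex of the new key interval along the witness route. This is stronger than a lower bound. It matters because the optimal path may cross $K_{i+1}$ at a row different from $v_{\mathrm{proj}}$, and the waypoint rules must not have committed to a suboptimal row.

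My first attempt would be an induction that combines the two parts of Lemma~\ref{lem:pruning-safety}. Between appended waypoints the chain is horizontally monotone and its boundaries are bounded, exactly as in Lemma~\ref{lem:waypoint-feasibility}. On such a stretch, the optimal cost from $w_n$ to any vertex of $I_m$ is its Manhattan distance, and that equals $\Phi$ by the projection identity. Whenever monotonicity or the boundary bound would fail, a key point or transition vertex is forced. Lemma~\ref{lem:pruning-safety} then shows that routing through it loses nothing.
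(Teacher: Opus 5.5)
Your proposal follows the paper's proof. It has the same case split and the same A*-style invariant: some open state admits a completion of cost at most $L^\star=\C(s,t)$. That invariant is maintained through Lemmas~\ref{lem:pruning-safety} and~\ref{lem:state-dominance} and the backtrack shortcut, and the argument is closed by exactness of terminal $g$-values. Two points need repair.

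\textbf{Simplicity of the witness.} You do not show that the interval walk of $\pi^\star$ is simple, and you need it. Simplicity is what makes the subchains between consecutive key intervals horizontally monotone edges of $\mathcal{G}_{\mathcal{K}}$ (via Theorem~\ref{thm:key-point}). It is also what rules out immediate backtracks along the witness. Reusing the construction from Theorem~\ref{thm:completeness} does not supply it: that construction erases cycles, which would detach the witness from the length of $\pi^\star$. The paper chooses, among all shortest paths, one with the fewest interval transitions. In fact every shortest path qualifies. Leaving a vertical interval and later returning to it costs at least two horizontal moves more than the vertical segment joining the two visits.

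\textbf{The ``main obstacle'' is not needed.} The pointwise exactness you single out is not what the argument needs, and it does not survive dominance. A state that replaces the witness state may reach other rows of the key interval strictly more cheaply than the witness route does. The paper's invariant is one-sided and anchored at the row $y_i$ where $\pi^\star$ enters $K_i$:
\[
\Phi_c(y_i)+\bigl|\pi^\star[(K_i.x,y_i)\to t]\bigr|\le L^\star ,
\]
where the second term is the length of the part of $\pi^\star$ after $(K_i.x,y_i)$.

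Fixing the row this way, rather than taking a minimum over $y$ of true distances, is also what licenses continuing along $\Gamma_i$. The bound passes to the successor by the Manhattan triangle inequality when no waypoint is appended, and by Lemma~\ref{lem:pruning-safety} when one is. It passes to a dominating state because Lemma~\ref{lem:state-dominance} gives a pointwise bound. The only exactness the argument uses is realizability of profile values, which your third bullet already takes from Lemma~\ref{lem:waypoint-feasibility}. Equality with the best witness-route cost plays no role.
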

\begin{proof}
Let $L^\star=\C(s,t)$. If the initial non-key-context test succeeds,
KIA* returns a path of length $\CM(s,t)$. Since Manhattan distance
lower-bounds the length of every 4-connected grid path, this path is
shortest. We therefore consider the case in which interval-level search
is performed.

Among all shortest paths, choose one, denoted by $\pi^\star$, with the
fewest interval transitions. Its interval walk is simple. Otherwise, if
the same vertical interval occurred twice at vertices $a$ and $b$, the
subpath between them could be replaced by the vertical segment within
that interval. This replacement would not increase the path length and
would reduce the number of interval transitions, contradicting the
choice of $\pi^\star$.

As in the proof of Theorem~\ref{thm:completeness}, the interval walk of
$\pi^\star$ decomposes into horizontally monotone subchains between
consecutive key intervals. Each such subchain is an edge of the
key-interval graph, while the initial and terminal portions, when
nonempty, lie in the corresponding non-key components. Thus
$\pi^\star$ induces a sequence of candidate successors from an initial
state in $\mathcal{K}_s$ to a terminal successor through an interval in
$\mathcal{K}_t$. Simplicity of the interval walk ensures that none of
these candidates is an immediate backtrack.

We show that the search retains a state admitting a completion of total
cost at most $L^\star$. At the first key interval, the initialization
profile reaches the vertex used by $\pi^\star$ with cost no greater than
the corresponding prefix cost. This is immediate when $I_s$ is key and
follows from Lemma~\ref{lem:context-manhattan} otherwise.

Suppose that a retained state $c$ at a key interval $K$ admits such a
completion. When $c$ is expanded, consider the successor corresponding
to the next edge of that completion. If no waypoint is appended, the
completion remains available. If a key point or transition vertex is
appended, Lemma~\ref{lem:pruning-safety} allows the local traversal to
be modified to pass through that waypoint without increasing its
length. The generated successor therefore also admits a completion of
cost at most $L^\star$.

If the successor is removed by dominance pruning,
Lemma~\ref{lem:state-dominance} provides a retained state whose
arrival-cost profile is no greater at every vertex of the same key
interval. The remaining suffix can therefore be realized with no greater
total cost; if its first traversal forms an immediate backtrack, removing
that round trip yields a shorter realization of the same suffix. Hence
the retained state also admits a completion of cost at most
$L^\star$.

Consequently, until a terminal state of cost at most $L^\star$ is
extracted, the open list contains either such a terminal state or a
nonterminal state admitting a completion of cost at most $L^\star$.
By the heuristic analysis in State Dominance and Cost Evaluation,
$f(c)$ lower-bounds the cost of every completion represented by a
nonterminal state $c$. Therefore, any such frontier state satisfies
$f(c)\le L^\star$. For a terminal state, $h(c)=0$ and $f(c)=g(c)$,
where $g(c)$ is the exact length of its reconstructed grid path.

At each iteration, KIA* extracts a state of minimum current $f$-value.
Thus, while the open list contains a state with $f\le L^\star$, a
terminal state of cost greater than $L^\star$ cannot be extracted first.
The first extracted terminal state therefore has cost at most
$L^\star$. Since it represents a valid grid path, its cost is also at
least $\C(s,t)=L^\star$. Its cost is therefore exactly $L^\star$, and
KIA* returns a shortest 4-connected grid path.
\end{proof}
\section{Experimental Results}

We evaluate Key-Interval~A* in terms of online runtime,
preprocessing time, retained preprocessing memory, and path quality on
the MovingAI grid benchmark suite~\cite{sturtevant2012benchmarks}. We
compare KIA* with A*, Jump Point Search for 4-connected grids
(JPS4)~\cite{baum2025jps4}, Rectangular Symmetry Reduction
(RSR)~\cite{DBLP:conf/aiide/HaraborB10}, and HPA* with cluster size 10. All solvers use optimized
C++ implementations and were run single-threaded on an AMD Ryzen~7
9700X processor with 16\,GB RAM. Our JPS4 implementation is adapted from the Warthog pathfinding
library~\cite{warthog} and retains its cache-friendly bitsets,
precomputed horizontal jumps, and reusable query workspace. Its
map-dependent preprocessing cost and persistent bitset and jump-table
storage are included in the reported measurements; the reusable query
workspace is excluded from retained memory, consistently with the other
solvers. The source code and experimental scripts are available at
\url{https://github.com/TaiquanSui/Key-Interval-Astar}.

Each map contributes up to 500 valid scenarios selected by a
deterministic bucket-stratified procedure, with the same scenario set
used by all solvers. Each solver performs one warm-up batch followed by
five measured batches. Runtime is the per-scenario median over the five
runs, averaged within each map and then equally across maps.
Preprocessing time measures map-dependent initialization, while retained
memory measures map-dependent preprocessing data retained between
queries and excludes online query workspaces. The Dragon Age group combines the DAO and DA2
map sets, whereas BGMaps is reported separately.

Table~\ref{tab:main_results} summarizes the results. KIA*
achieves the lowest online runtime on all seven non-Random benchmark
groups. Relative to the fastest non-KIA* baseline in each group, its
speedup is approximately \(1.8\times\) on Maze, \(2.7\times\) on Room,
\(3.8\times\) on Street, \(4.7\times\) on BG512, \(3.0\times\) on
BGMaps, \(3.2\times\) on Dragon Age, and \(3.0\times\) on StarCraft.
The largest gains occur on structured and game-derived maps containing
rooms, corridors, and large regular regions.

The density-separated Random results reveal a more nuanced trend. HPA*
has the lowest online runtime at every evaluated density. At 10\% and
15\% obstacle density, KIA* is slower than all four baselines. At 20\%,
it becomes slightly faster than RSR, and at 25\% and 30\% it is faster
than both A* and RSR. At 35\% and 40\%, KIA* also overtakes JPS4 while
remaining slower than HPA*. Thus, KIA* becomes increasingly competitive
online as obstacle density rises, although it does not outperform HPA*
on Random maps.

KIA*'s preprocessing requirements depend strongly on the compactness of
the interval representation. On Street, BG512, BGMaps, Dragon Age, and
StarCraft, retained memory ranges from 0.12\,MB to 2.35\,MB. Random
maps are substantially more expensive and exhibit a non-monotonic
trend: retained memory increases from 20.35\,MB at 10\% obstacle
density to 32.59\,MB at 25\%, while preprocessing time rises from
61.689\,ms to 131.660\,ms. Both quantities subsequently decrease,
reaching 18.35\,MB and 58.559\,ms at 40\%. This pattern is consistent
with an interaction between structural fragmentation and the amount of
traversable space: intermediate densities may induce many intervals and
boundary changes, whereas the highest densities leave less free space
to represent. Maze maps also incur relatively high preprocessing costs,
consistent with their narrow corridor structure. JPS4 retains
approximately 0.10\,MB across the Random densities because these maps
have identical \(512\times512\) dimensions and its persistent bit-grid
storage depends primarily on map dimensions rather than obstacle
density.

KIA*, JPS4, and RSR match the path lengths returned by A* on every
evaluated scenario. HPA* returns slightly longer paths, with mean
per-instance HPA*/A* length ratios ranging from 1.010 on StarCraft to
1.053 on BGMaps.

Overall, KIA* provides its strongest online speedups and lowest
preprocessing overhead when free space admits a compact interval
representation, while preserving exact 4-connected shortest-path
lengths.

\section{Conclusion and Future Work}

We presented Key-Interval A*, an optimal pathfinding algorithm
for 4-connected grids that searches over a compact interval-level
abstraction derived from structural changes in free space. KIA* combines
lightweight preprocessing, A*-style search over a key-interval graph,
and constructive grid-path recovery without cell-level local search. We
established its completeness and optimality, and experiments showed that
KIA* preserves exact shortest-path lengths while achieving strong runtime
performance, particularly on structured and game maps.

Future work includes improving efficiency on highly fragmented random
maps, extending the abstraction to 8-connected and any-angle
pathfinding, and exploiting the preprocessed map structure for efficient
local repair and partial replanning in dynamic environments.

\bibliography{aaai2027}

@article{DBLP:journals/tssc/HartNR68,
  author       = {Peter E. Hart and
                  Nils J. Nilsson and
                  Bertram Raphael},
  title        = {A Formal Basis for the Heuristic Determination of Minimum Cost Paths},
  journal      = {{IEEE} Trans. Syst. Sci. Cybern.},
  volume       = {4},
  number       = {2},
  pages        = {100--107},
  year         = {1968},
  url          = {https://doi.org/10.1109/TSSC.1968.300136},
  doi          = {10.1109/TSSC.1968.300136},
  bibsource    = {dblp computer science bibliography, https://dblp.org}
}

@inproceedings{DBLP:conf/aaai/HaraborG11,
  author       = {Daniel Damir Harabor and
                  Alban Grastien},
  editor       = {Wolfram Burgard and
                  Dan Roth},
  title        = {Online Graph Pruning for Pathfinding On Grid Maps},
  booktitle    = {Proceedings of the Twenty-Fifth {AAAI} Conference on Artificial Intelligence,
                  {AAAI} 2011, San Francisco, California, USA, August 7-11, 2011},
  pages        = {1114--1119},
  publisher    = {{AAAI} Press},
  year         = {2011},
  url          = {https://doi.org/10.1609/aaai.v25i1.7994},
  doi          = {10.1609/AAAI.V25I1.7994},
  bibsource    = {dblp computer science bibliography, https://dblp.org}
}

@inproceedings{DBLP:conf/aiide/HaraborB10,
  author       = {Daniel Harabor and
                  Adi Botea},
  editor       = {G. Michael Youngblood and
                  Vadim Bulitko},
  title        = {Breaking Path Symmetries on 4-Connected Grid Maps},
  booktitle    = {Proceedings of the Sixth {AAAI} Conference on Artificial Intelligence
                  and Interactive Digital Entertainment, {AIIDE} 2010, October 11-13,
                  2010, Stanford, California, {USA}},
  publisher    = {The {AAAI} Press},
  year         = {2010},
  url          = {http://aaai.org/ocs/index.php/AIIDE/AIIDE10/paper/view/2122},
  bibsource    = {dblp computer science bibliography, https://dblp.org}
}

@article{DBLP:journals/jogd/Botea0S04,
  author       = {Adi Botea and
                  Martin M{\"{u}}ller and
                  Jonathan Schaeffer},
  title        = {Near Optimal Hierarchical Path-Finding},
  journal      = {J. Game Dev.},
  volume       = {1},
  number       = {1},
  pages        = {1--30},
  year         = {2004},
  bibsource    = {dblp computer science bibliography, https://dblp.org}
}

@inproceedings{DBLP:conf/aaai/YapBHS11,
  author       = {Peter Yap and
                  Neil Burch and
                  Robert C. Holte and
                  Jonathan Schaeffer},
  editor       = {Wolfram Burgard and
                  Dan Roth},
  title        = {Block A*: Database-Driven Search with Applications in Any-Angle Path-Planning},
  booktitle    = {Proceedings of the Twenty-Fifth {AAAI} Conference on Artificial Intelligence,
                  {AAAI} 2011, San Francisco, California, USA, August 7-11, 2011},
  pages        = {120--125},
  publisher    = {{AAAI} Press},
  year         = {2011},
  url          = {https://doi.org/10.1609/aaai.v25i1.7813},
  doi          = {10.1609/AAAI.V25I1.7813},
  bibsource    = {dblp computer science bibliography, https://dblp.org}
}

@inproceedings{DBLP:conf/aips/UrasKH13,
  author       = {Tansel Uras and
                  Sven Koenig and
                  Carlos Hern{\'{a}}ndez},
  editor       = {Daniel Borrajo and
                  Subbarao Kambhampati and
                  Angelo Oddi and
                  Simone Fratini},
  title        = {Subgoal Graphs for Optimal Pathfinding in Eight-Neighbor Grids},
  booktitle    = {Proceedings of the Twenty-Third International Conference on Automated
                  Planning and Scheduling, {ICAPS} 2013, Rome, Italy, June 10-14, 2013},
  publisher    = {{AAAI}},
  year         = {2013},
  url          = {http://www.aaai.org/ocs/index.php/ICAPS/ICAPS13/paper/view/6058},
  bibsource    = {dblp computer science bibliography, https://dblp.org}
}

@inproceedings{DBLP:conf/aiide/Botea11,
  author       = {Adi Botea},
  editor       = {Vadim Bulitko and
                  Mark O. Riedl},
  title        = {Ultra-Fast Optimal Pathfinding without Runtime Search},
  booktitle    = {Proceedings of the Seventh {AAAI} Conference on Artificial Intelligence
                  and Interactive Digital Entertainment, {AIIDE} 2011, October 10-14,
                  2011, Stanford, California, {USA}},
  publisher    = {The {AAAI} Press},
  year         = {2011},
  url          = {http://www.aaai.org/ocs/index.php/AIIDE/AIIDE11/paper/view/4050},
  bibsource    = {dblp computer science bibliography, https://dblp.org}
}

@article{sturtevant2012benchmarks,
  title={Benchmarks for Grid-Based Pathfinding},
  author={Sturtevant, N.},
  journal={Transactions on Computational Intelligence and AI in Games},
  volume={4},
  number={2},
  pages={144 -- 148},
  year={2012},
  url = {http://web.cs.du.edu/~sturtevant/papers/benchmarks.pdf},
}

@misc{baum2025jps4,
  title        = {Jump Point Search Pathfinding in 4-connected Grids},
  author       = {Baum, Johannes},
  year         = {2025},
  eprint       = {2501.14816},
  archivePrefix = {arXiv},
  primaryClass = {cs.RO},
  doi          = {10.48550/arXiv.2501.14816}
}

@misc{warthog,
  author       = {Harabor, Daniel and Hechenberger, Ryan},
  title        = {Warthog: A Highly-Optimised Pathfinding Search Solver},
  year         = {2024},
  howpublished = {\url{https://bitbucket.org/dharabor/pathfinding/src/master/warthog/}},
  note         = {Software library, accessed 2026-05}
}

\end{document}